\def\eqref#1{equation~\ref{#1}}
\def\1{\bm{1}}
\DeclareMathAlphabet{\mathsfit}{\encodingdefault}{\sfdefault}{m}{sl}
\SetMathAlphabet{\mathsfit}{bold}{\encodingdefault}{\sfdefault}{bx}{n}
\def\gD{{\mathcal{D}}}
\def\gP{{\mathcal{P}}}
\def\gX{{\mathcal{X}}}
\def\gY{{\mathcal{Y}}}
\newcommand{\E}{\mathbb{E}}
\newcommand{\R}{\mathbb{R}}
\newcommand{\Var}{\mathrm{Var}}
\DeclareMathOperator*{\argmax}{arg\,max}
\DeclareMathOperator*{\argmin}{arg\,min}
\newtheorem{theorem}{Theorem}
\newtheorem{prop}{Proposition}
\newtheorem{lemma}[theorem]{Lemma}
\newtheorem{definition}{Definition}
\newcommand{\F}[0]{{\mathcal{V}}}
\newcommand{\X}[0]{{\mathcal{X}}}
\newcommand{\D}[0]{{\mathcal{D}}}
\newcommand{\Y}[0]{{\mathcal{Y}}}
\newcommand{\Z}[0]{{\mathcal{Z}}}
\newcommand{\U}[0]{{\mathcal{U}}}
\newcommand{\Eb}[0]{{\mathbb{E}}}
\newcommand{\tb}{{\bf t}}
\renewcommand{\emptyset}{\varnothing}
\newcommand{\exy}{{\mathbb{E}_{x, y \sim X, Y}}}
\newcommand{\n}{\varnothing}
\newcommand{\s}[1]{{\color{magenta} [SE: #1]}}
\titlespacing{\subsubsection}{0pt}{*0}{*0}
\def\setstretch#1{\renewcommand{\baselinestretch}{#1}}
\newcommand\numberthis{\addtocounter{equation}{1}\tag{\theequation}}
\title{A Theory of Usable Information under \\ Computational Constraints}
\author{Yilun Xu \\
Peking University\\
\texttt{xuyilun@pku.edu.cn} \\
\And
Shengjia Zhao \\
Stanford University\\
\texttt{sjzhao@stanford.edu} \\
\And
Jiaming Song \\
Stanford University\\
\texttt{tsong@cs.stanford.edu} \\
\And
Russell Stewart \\
\texttt{russell.sb.nebel@gmail.com} \\
\And
Stefano Ermon \\
Stanford University\\
\texttt{ermon@cs.stanford.edu} \\
}
\begin{document}

\maketitle

\begin{abstract}
%Shannon mutual information is notoriously difficult to be estimated in high dimension or estimated with few samples, and Shannon mutual information also has limited effect for machine learning. 
We propose a new framework for reasoning about information in complex systems.  Our foundation is based on a variational extension of Shannon's information theory that takes into account the modeling power and computational constraints of the observer. 
%We redefine the entropy as the predictive $\F$-entropy, which depends on the property of observer $\F$. Based on predictive $\F$-entropy, we propose predictive $\F$-information, the rational behind which lies in the reduction of uncertainty.
The resulting \emph{predictive $\F$-information} 
encompasses mutual information and other notions of informativeness such as the coefficient of determination.
Unlike Shannon's mutual information and in violation of the 
%celebrated 
data processing inequality, $\F$-information can be created through computation. This is
consistent with
%with the role played by 
deep neural networks 
%in 
extracting hierarchies of progressively more informative features in representation learning.
Additionally, we show that by incorporating computational constraints, $\F$-information can be reliably estimated from data even in high dimensions with PAC-style guarantees. 
%Unlike Shannon's mutual information, predictive $\F$-information (1) depends on the capacity of the observer $\F$, (2) is asymmetric in the general case, and (3) can be estimated from data in high dimension. We mix several case studies where Shannon mutual information falls short and predictive $\F$-information succeeds 
Empirically, we demonstrate predictive $\F$-information is more effective than mutual information for structure learning and fair representation learning. 
%Our formulation should appeal to those deeply puzzled by the mathematical quirks of mutual information or those who use abstractions from information theory to reason about complex systems.
\end{abstract}

\section{Introduction}

Extracting actionable \emph{information} from noisy, possibly redundant, and high-dimensional
%, and multi-modal 
data sources is a key computational and statistical challenge at the core of AI and machine learning. Information theory, which lies at the foundation of AI and machine learning, provides a conceptual framework to characterize information in a mathematically rigorous sense~\citep{Shannon1948TheMT,Cover1991ElementsOI}. 
%\yx{Specifically, Shannon's mutual information ~\citep{Shannon1948TheMT} is a widely used empirical and theoretical tool for measuring the relatedness of random quantities, with applications in communication~\citep{Gallager1968InformationTA}, physics~\citep{li1990mutual}, neuroscience~\citep{brunel1998mutual}, genetics ~\citep{olsen2008impact} and machine learning ~\citep{Hjelm2018LearningDR,Cao2019MaxMIGAI,chow1968approximating}, etc.} 
However, important computational aspects are not considered in information theory.
%The fundamental limitations of MI that we attempt to reconcile are rooted in two empirical difficulties of Shannon information: difficulty if not impossibility of measuring information between high dimensional or continuous variables~\citep{battiti1994using}, and incompatibility of results from information theory with practical intuitions of machine learning.
To illustrate this, consider a dataset of encrypted messages intercepted from an opponent. 
According to information theory, these encrypted messages have high mutual information with the opponent's plans.
Indeed, with infinite computation, the messages can be decrypted and the plans revealed.
Modern cryptography originated from this observation by Shannon that perfect secrecy is (essentially) impossible if the adversary is computationally unbounded~\citep{Shannon1948TheMT}. This motivated cryptographers to consider restricted classes of adversaries that have access to limited computational resources~\citep{Pass2010Course}. 
More generally, it is known that information theoretic quantities can be expressed in terms of betting games~\citep{Cover1991ElementsOI}.  For example, the (conditional) entropy of a random variable $X$ is directly related to how predictable $X$ is in a certain betting game, where an agent is rewarded for correct guesses. Yet, the standard definition unrealistically assumes agents are computationally unbounded, i.e., they can employ arbitrarily complex prediction schemes. %\yx{Moreover, as shown in section \ref{sec:properties}, the data processing inequality and symmetry of Shannon information conflicts with some practical concerns and phenomenons in machine learning. } \yx{more about the necessity of $\F$-information}

%\yx{To reconcile the problems of infinite computation power in information theory and the undesirable properties of mutual information, we propose a new theory of information with computational constraints. }
Leveraging modern ideas from variational inference and learning~\citep{Ranganath2013BlackBV,Kingma2013AutoEncodingVB,LeCun2015DeepL}, we propose an alternative 
formulation based on realistic computational constraints that is in many ways closer to our intuitive notion of information, which we term \emph{predictive $\F$-information}. 
 Without constraints, predictive $\F$-information specializes to classic mutual information.  
 %~\citep{jiao2015justification}. 
 Under natural restrictions, $\F$-information specializes to other well-known notions of predictiveness, such as the coefficient of determination ($R^2$). 
 A consequence of this new formulation is that computation can ``create usable information'' (e.g., by decrypting the intercepted messages), invalidating the famous data processing inequality. This generalizes the idea that clever feature extraction enables prediction with extremely simple (e.g., linear) classifiers, a key notion in modern representation and deep learning \citep{LeCun2015DeepL}.

As an additional benefit, we show that predictive $\F$-information can be estimated with %is capable of providing 
statistical guarantees 
%of dependence between two variables 
using the Probably Approximately Correct framework~\citep{valiant1984theory}. This is in sharp contrast with Shannon information, which is well known to be difficult to estimate for high dimensional or continuous random variables~\citep{battiti1994using}. 
%For applications that require lower / upper bounds to mutual information, we show that Shannon information guarantees are usually impossible due to large bias or variance, and existing estimations all have implicit assumptions, which can lead to unexpected consequences in our experiments. On the other hand, $\F$ information can be estimated with guarantees, and the underlying assumption is clear --- the maximum reduction of prediction loss under the $\F$ function family. We also show that the departure of the mathematical properties of predictive $\F$-information from those of MI, including asymmetry, dependence on $\F$, invariance and a lack of the Data Processing Inequality, are beneficial for the design and analysis of machine learning algorithms. 
%We first illustrate of benefits using $\F$ information on structure learning. 
Theoretically we show that the statistical guarantees of estimating $\F$ information translate to statistical guarantees for a variant of the Chow-Liu algorithm for structure learning. In practice, when the observer employs deep neural networks as a prediction scheme, $\F$-information outperforms methods that approximate Shannon information in various applications, including Chow-Liu tree contruction in high dimension and gene regulatory network inference.

\section{Definitions and Notations}
\newcommand{\range}{\mathrm{range}}
To formally define the predictive $\F$-information, we begin with a formal model of a computationally bounded agent
%\yx{need to specify the context of "agent"?}  
trying to predict the outcome of a real-valued random variable $Y$; the agent is either provided another real-valued random variable $X$ as side information, or provided no side information $\emptyset$. We use $\gX$ and $\gY$ to denote the samples spaces of $X$ and $Y$ respectively (while assuming they are separable), and use $\gP(\gX)$ to denote the set of all probability measures over the Borel algebra on $\gX$ ($\gP(\gY)$ similarly defined for $\gY$).
% $\mathcal{P}=\{P|P \text{ is a probability measure defined on } \F_{\mathcal{Y}}\}$, where $\F_{\mathcal{Y}}$ denotes the $\sigma$-algebra of $\mathcal{Y}$.
%\yx{In the following we define the concept of Predictive Family, which acts as a regularity for the predictive $\F$-information.}
%The only requirement is a weak regularity condition Eq. (\ref{eq:optional-ignorance}). % (if we do not observe $X$, we denote as $\emptyset$).

 %We also assume that the underlying measure is either the counting measure (discrete) or the Lesbegue measure in multi-dimensional real space (continuous). 
\begin{definition}[Predictive
Family]\footnote{Regularity Conditions: To minimize technical overhead we restrict out discussion only to distributions with probability density functions (PDF) or probability mass functions (PMF) with respect to the underlying measure. Also $\emptyset \not\in \mathcal{X}$.}
\label{def:predfamily}
Let $\Omega=\{f:\mathcal{X} \cup \lbrace \n \rbrace \rightarrow \mathcal{P}(\mathcal{Y})\}$. We say that $\F \subseteq \Omega$ is a predictive family if it satisfies
\begin{align*}
    \forall f \in \F, \forall P \in \mathrm{range}(f), \quad \exists f' \in \F, \quad s.t. \quad \forall x \in \X, f'[x] = P, f'[\n] = P \numberthis\label{eq:optional-ignorance}
\end{align*}
% \js{another thing, an element of $\gP(\gY)$ is a probability measure, which might be more suited to represent with $P$ instead of $p$ (this is mostly used for density) if $p$ is pdf then $p = \mathrm{d} P$}
% \sj{To check that the above does include both previous conditions}
% \begin{align*}
% \forall f \in \F, \exists f' \in \F \quad s.t. \quad \forall x \in \mathcal{X}, \quad  f'[x] = f[\emptyset] \numberthis\label{eq:optional-ignorance} \\
% \forall f \in \F, x \in \mathcal{X}, \exists f' \in \F \quad s.t. \quad f'[\n] = f[x] \numberthis\label{eq:non-trivial}
% \end{align*} 
\end{definition}
% \s{use brackets to make notation consistent}
%
%\s{one unsatisfactory thing is that this ends up being dependent on $\mathcal{X}$. it should be more properly called an $X-Y$ predictive family, but this might be overkill}

% \yx{I think Eq.(2) should be added?}\sj{Is it necessary for anything?}\yx{ only for the prop: If $X$ is independent of  $Y$, $I_{\F}(X \rightarrow Y) = I_{\F}(Y \rightarrow X) = 0$}
%\s{intuitively, i would think (1) is enough, but haven't looked at the details}
A predictive family is a set of predictive models the agent is allowed to use, e.g., due to computational or statistical constraints. 
We refer to the additional condition in Eq.(\ref{eq:optional-ignorance}) as \emph{optional ignorance}. 
Intuitively, it means that the agent can, in the context of the prediction game we define next, ignore the side information if she chooses to. 
%Intuitively, it means that whenever the agent can predict some probability $p$ on $\Y$, it has the option to ignore the input, and predict $p$ no matter $X$ is observed or not. 
% Intuitively the agent can choose to ignore $X$: if she can predict a distribution in $\mathcal{P}(\mathcal{Y})$ without observing $X$, she can still predict that distribution after observing $X$. Eq.(\ref{eq:non-trivial}) ensures that conditioning on $X$ cannot increase the predictability of $Y$ simply by virtue of a more expressive model, without needing any information from $X$. \js{not sure if i understand the second bit} TODO: can let X be an empty set. 
%we can predict without observing $X$ (i.e. $f(\emptyset)$) we can still predict after observing $X$.  
%
%Given a predictive family, we define $\F$-Conditional Entropy as a measure of the uncertainty over a random variable $Y$ given side information $X$.
%
\begin{definition}[Predictive conditional $\F$-entropy]
\label{def:f-entropy}
Let $X,Y$ be two random variables taking values in $\mathcal{X} \times \mathcal{Y}$, and $\F$ be a predictive family.
Then the predictive conditional $\F$-entropy is defined as
\begin{align*}
H_{\F}(Y | X) &= \inf_{f \in \F} \mathbb{E}_{x,y \sim X,Y}\left[-\log f[x](y)\right] \\
H_{\F}(Y | \emptyset) &= \inf_{f \in \F} \mathbb{E}_{y \sim Y}\left[-\log f[\n](y)\right]
\end{align*}
We additionally call $H_{\F}(Y|\emptyset)$ the $\F$-entropy, and also denote it as $H_{\F}(Y)$
\end{definition}
 %as our focus is on predictive information, so we adopt this notation. %We require an additional condition on $\F$ defined below

In our notation $f$ is a function $\X \cup \lbrace \n \rbrace \to \mathcal{P}(\Y)$, so $f[x] \in \mathcal{P}(\Y)$ is a probability measure on $\Y$ chosen based on the received side information $x$ (we use $f[\cdot]$ instead of the more conventional $f(\cdot)$); and $f[x](y) \in \mathbb{R}$ is the value of the density evaluated at $y \in \Y$.   %We use the notation $f[\cdot]$ as the function mapping of $f$, so every$f[\cdot]$ is a probability density function of $Y$. 
Intuitively, $\F$ (conditional) entropy is the smallest expected negative log-likelihood that can be achieved predicting $Y$ given observation (side information) $X$ (or no side information $\emptyset$), using models from $\F$. 
Eq.(\ref{eq:optional-ignorance}) means that whenever the agent can use $P$ to predict $\Y$'s outcomes, it has the option to ignore the input, and use $P$ no matter whether $X$ is observed or not. 
%\s{Explain this better}
%\s{explain the unusual notation with brackets in words to make sure people get it}
%It can also be recognized as the smallest expected cross entropy between the true underlying distribution of $Y|X$ (or $Y|\emptyset$) and the distributions in the set $\{f[X]\mid f\in \F\}$. 

Definition~\ref{def:f-entropy} generalizes several known definitions of uncertainty. For example, as shown in proposition \ref{prop:property}, if the $\F$ is the largest possible predictive family that includes all possible models, i.e. $\F = \Omega$, then Definition~\ref{def:f-entropy} reduces to Shannon entropy: $H_\Omega(Y|X) = H(Y|X)$ and $H_{\F}(Y | \emptyset) = H_\Omega(Y) = H(Y)$. By choosing more restrictive families $\F$, we recover several other notions of uncertainty such as trace of covariance, as will be shown in Proposition~\ref{prop:subcase}.
% infimum in Eq.(\ref{eq:cond_entropy}) is achieved by
% %best approximation function in $\F$ is 
% the true conditional density function, and we have $H_{\F}(X|Y) = H(X|Y)$. 

%Now we can define a notion of ``mutual information'', which is the reduction of entropy after conditioning on $X$. %However, for this mutual information to possess several desirable properties, we need an extra condition on $\F$ and $\F'$. 

%For several reasonable properties of mutual information to hold, we need an additional requirement. 

Shannon mutual information is a measure of changes in entropy when conditioning on new variables:
% \js{the two $\Omega$ are not the same for entropy and conditional entropy, why not use the $\Omega$ for conditional entropy always? in entropy your input is $\varnothing$ anyways, the only fix you need is $\gX \cup \{\varnothing\}$ for the domain of $f$}:
\begin{align}
\label{eq:mutinf}
I(X; Y) = H(Y) - H(Y | X) = H_{\Omega}(Y) - H_\Omega(Y | X)
\end{align}
Here, we will use predictive $\F$-entropy to define an analogous quantity, $I_{\F}(X \rightarrow Y)$, to represent the $\text{\it{change}}$ in predictability of an output variable $Y$ when given side information $X$. %We use the predictive $\F$-entropy to define the predictive $\F$-information:

\begin{definition}[Predictive $\F$-information]
\label{def:f-information}
Let $X,Y$ be two random variables taking values in $\mathcal{X} \times \mathcal{Y}$, and $\F$ be a predictive family. 
The predictive $\F$-information from $X$ to $Y$ is defined as
\begin{equation}
I_\F(X \rightarrow Y) = H_{\F}(Y \vert \emptyset) - H_\F(Y | X)
\end{equation}
\end{definition}

%We refer to the condition in Eq.(\ref{eq:optional-ignorance}) as optional ignorance. We show in Proposition \ref{prop:subcase} and Appendix \ref{mini} that the optional-ignorance guarantees the non-negativity of $\F$-information. This is because it ensures that that conditioning on $X$ never reduces predictability because in the worst case we can ignore it. %cannot increase the predictability of $Y$ simply by virtue of a more expressive model It is easy to satisfy in practice. 
% For example, for any set of distributions $\lbrace p_\theta(y) \vert \theta \in \mathbb{R}^d \rbrace$ indexed by $d$ real parameters $\theta \in \mathbb{R}^d$, if we choose $\F' = \lbrace \lbrace \n \rbrace \to p_\theta \vert \theta \in \mathbb{R}^d \rbrace$, and $\F$ is defined by a neural network mapping $x \in \mathcal{X}$ to $\theta$, then optional ignorance is automatically satisfied. This is because we can set all other network weights to 0 and adjust the bias terms in the last layer to output any value of $\theta \in \mathbb{R}^d$. %model all densities without conditioning on $X$. \sj{to elaborate this more}

% The definition of , without needing any information from $X$. 
\subsection{Important Special Cases}
Several important notions of uncertainty and predictiveness are special cases of our definition. Note that when we are defining $\F$-entropy of a random variable $Y$ in sample space $ \Y \in \mathbb{R}^d$ (without side information), out of convenience we can assume $\X$ is empty $\X = \emptyset$ (this does not violate our requirement that $\emptyset \not\in \X$.)  
%\s{previous defs are for random vars, now we talk about distributions. make consistent to avoid confusion }
%\s{better stated as: let F be .., then the F-entropy of a random var Y is equal to ..}
\begin{restatable}{prop}{propvar}
For $\F$-entropy and $\F$-information, we have 
\label{prop:subcase}
\begin{enumerate}
    \item Let $\Omega$ be 
    %the set of all functions $\X \cup \lbrace \n \rbrace \to \mathcal{P}(\mathcal{Y})$ 
    as in Def.~\ref{def:predfamily}. 
    %Then $H_{\Omega}(Y)$, $H_{\Omega}(Y\mid X)$, and $I_{\Omega}(X; Y)$ are equal to Shannon's entropy, conditional entropy, and mutual information respectively. 
    Then $H_{\Omega}(Y)$ is the Shannon entropy,  $H_{\Omega}(Y\mid X)$ is the Shannon conditional entropy, and   $I_{\Omega}(Y\rightarrow X)$ is the Shannon mutual information. 
    %($\Omega$ is the set of all functions $\X \cup \lbrace \n \rbrace \to \mathcal{P}(\Yc)$ defined in Definition~\ref{def:f-entropy})
    
    % \item Let $\F = \{f: \mathcal{X} \cup \lbrace \n \rbrace \rightarrow Lap(\mu, 1)\mid \mu \in \mathbb{R}\}$, then the $\F$-entropy of a random variable $Y$ equal to the the median absolute deviation of $Y$.
    % \item Let $\F = \{f: \mathcal{X} \cup \lbrace \n \rbrace \rightarrow \frac{1}{d\Gamma(d)B_d}e^{-\parallel y-\mu \parallel_2}\mid \mu \in \mathbb{R}^d\}$, where $B_d$ is the volume of d-dimensional unit ball, then the $\F$-entropy of a random variable $Y$ equals to the the median absolute deviation of $Y$, up to a constant.
    \item Let $\mathcal{Y} = \mathbb{R}^d$ and $\F = \{f: \lbrace \n \rbrace \rightarrow P_\mu \mid \mu \in \mathbb{R}^d\}$, where $P_\mu$ is the distribution with density $y \mapsto \frac{1}{Z}e^{-\parallel y -\mu \parallel_2}$ where
    %$Z$ is a normalizing constant 
    $Z = \int e^{-\parallel y-\mu \parallel_2} dy$, then the $\F$-entropy of a random variable $Y$ equals its mean absolute deviation, 
    %of $Y$, 
    up to an additive constant.
    
    % Specifically when $d=1$, $\frac{1}{d\Gamma(d)B_d}e^{-\parallel y-\mu \parallel_2}$ is the Laplace distribution $\mathrm{Lap}(\mu,1)$.
    \item Let $\mathcal{Y} = \mathbb{R}^d$ and $\F = \{f: \lbrace \n \rbrace \rightarrow \mathcal{N}(\mu, \Sigma)\mid \mu \in \mathbb{R}^d, \Sigma = 1/2 I_{d \times d}\}$, then the $\F$-entropy of a random variable $Y$ equals the trace of its  covariance  $\mathrm{tr}\left(\mathrm{Cov}(Y)\right)$, up to an additive constant. %Specifically when $d=1$ the $\F$ entropy equals the variance $\mathrm{Var}(Y)$.
    %\s{state for 1d? or need covariance i think}
    % \item Let $\{\exp \left(\theta \cdot {\bf f}(y) - A(\theta) \right), \theta \in \Theta \}$ be a (minimal) exponential family, then $-A^*( \mathbb{E}_{y \sim Y}[{\bf f}(y)])$ is an $\F$-entropy, where $A^*$ is the Fenchel conjugate of $A$. 
    
    % \item Let $\F = \{f: \mathcal{X} \cup \lbrace \n \rbrace \rightarrow h(y) \exp \left(\theta \cdot {\bf t}(y) - A(\theta) \right), \theta \in \Theta \}$, with the range being a set of minimal exponential families with sufficient statistics ${\bf t}(y): \Y \to \mathbb{R}^d$, then the $\F$-entropy of a random variable $Y$ equals to $\mathbb{E}_{y \sim Y}[\log h(y)] - A^*(\mu) $, where $\mu = \mathbb{E}_{y \sim Y}[{\bf t}(y)]$ and $A^*$ is the Fenchel conjugate of $A$. 
    % % \s{interesting, but without some discussion, not clear it's insightful} 
    % $\F$ entropy is the maximum Shannon entropy for any random variable $\hat{Y}$ that satisfies $\mu = \mathbb{E}_{y \sim \hat{Y}}[{\bf t}(y)]$ : \sj{This statement does not currently work with $h(Y)$}
    
      \item Let $\F = \{f: \lbrace \n \rbrace \rightarrow Q_{{\bf t},\theta}, \theta \in \Theta \}$, where $Q_{{\bf t},\theta}$ is a distribution in a minimal exponential family with sufficient statistics ${\bf t}: \Y \to \mathbb{R}^d$ and set of natural parameters $\Theta$. For a random variable $Y$ with expected sufficient statistics $\mu_Y = \mathbb{E}[{\bf t}(Y)]$, 
      %= \{\theta \in \mathbb{R}^d_t|A(\theta) < +\infty\}$. 
      the $\F$-entropy of $Y$ is the maximum Shannon entropy over all random variables $\hat{Y}$ with identical expected sufficient statistics, i.e. $\mathbb{E}[{\bf t}(\hat{Y})] = \mu_{Y}$.% \sj{This statement does not currently work with $h(Y)$} then the $\F$-entropy of a random variable $Y$ equals to $\mathbb{E}_{y \sim Y}[\log h(y)] - A^*(\mu) $, where $\mu = \mathbb{E}_{y \sim Y}[{\bf t}(y)]$ and $A^*$ is the Fenchel conjugate of $A$. 
    % \s{interesting, but without some discussion, not clear it's insightful} 
    % \[ H_\F(Y) = \sup_{\hat{Y} \vert \mathbb{E}[{\bf t}(\hat{Y})] = \mu_Y} H(\hat{Y}) \]
    %More specifically it is equal to $\mathbb{E}_{y \sim Y}[\log h(y)] - A^*(\mu_Y) $, where 
    %$\mu_Y$ is \js{what?} and
    %$A^*$ is the Fenchel conjugate of $A$. 
    % \item The minimal population loss that achieved by networks is conditional $\F$-entropy, where $\F$ is a constrained class of machine learning functions, (e.g. Convolutional Neural Networks) mapping $X \subset \mathbb{R}^k \mapsto P(Y)$, where $P(Y)$ is a distribution of the label.\s{this doesnt seem rigorous. what about optimization? what is F here? also, is it really insightful?}
    \item Let $\mathcal{Y} = \mathbb{R}^d$, $\X$ be any vector space, and $\F = \{f:  x \mapsto \mathcal{N}(\phi(x), \Sigma),x \in \X; \n \mapsto \mathcal{N}(\mu, \Sigma) \vert \mu \in \mathbb{R}^d;\Sigma = {1}/{2}I_{d\times d}, \phi \in \Phi \}$, where $\Phi$ is the set of linear functions $\lbrace \phi: \X \to \mathbb{R}^d \rbrace$, then $\F$-information $I_\F(X \rightarrow Y)$ equals the (unnormalized) maximum coefficient of determination 
    %\s{is this coefficient of determination or squared correlation?}
     $R^2 \cdot \mathrm{tr}\left(\mathrm{Cov}(Y)\right)$ for linear regression. % with $\Phi$ as the regression model. 
\end{enumerate}
\label{prop:var}
\end{restatable}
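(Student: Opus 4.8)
The plan is to prove item 5 by reducing both conditional $\F$-entropies of Definition~\ref{def:f-entropy} to least-squares problems, using that the log-likelihood of an isotropic Gaussian is a squared error up to a constant that does not depend on the model parameters, and then recognizing the difference of the two resulting residuals as the trace-form coefficient of determination.

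\textbf{Step 1 (log-likelihood is squared error).} First I would write out the negative log-density of $\mathcal{N}(\nu,\Sigma)$ at $\Sigma=\tfrac12 I_{d\times d}$. Since $\Sigma^{-1}=2I$ and $\det\Sigma=2^{-d}$,
\[
-\log\mathcal{N}(y;\nu,\tfrac12 I) = \|y-\nu\|_2^2 + C,\qquad C:=\tfrac{d}{2}\log\pi,
\]
where $C$ depends only on the dimension $d$ and not on the mean $\nu$. Substituting $\nu=\phi(x)$ for the $X$-branch and $\nu=\mu$ for the $\emptyset$-branch of the family $\F$, every expected negative log-likelihood becomes an expected squared error plus the same additive constant $C$.

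\textbf{Step 2 (marginal entropy is the total variation).} Using Step 1, $H_\F(Y\mid\emptyset)=C+\inf_{\mu\in\mathbb{R}^d}\mathbb{E}\big[\|Y-\mu\|_2^2\big]$. The inner objective is a strictly convex quadratic in $\mu$, minimized at $\mu=\mathbb{E}[Y]$ with value $\mathbb{E}\big[\|Y-\mathbb{E}[Y]\|_2^2\big]=\mathrm{tr}(\mathrm{Cov}(Y))$, so $H_\F(Y\mid\emptyset)=C+\mathrm{tr}(\mathrm{Cov}(Y))$. Analogously,
\[
H_\F(Y\mid X) = C + \inf_{\phi\in\Phi}\mathbb{E}\big[\|Y-\phi(X)\|_2^2\big] =: C + \mathrm{RSS},
\]
where the infimum over affine maps $\phi$ is, by definition, the minimal residual of the best linear least-squares predictor of $Y$ from $X$ (equivalently, the one realizing the \emph{maximum} $R^2$). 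Subtracting, the constant $C$ cancels and Definition~\ref{def:f-information} gives $I_\F(X\to Y)=\mathrm{tr}(\mathrm{Cov}(Y))-\mathrm{RSS}$. Since the coefficient of determination is $R^2=1-\mathrm{RSS}/\mathrm{tr}(\mathrm{Cov}(Y))$, the right-hand side equals $R^2\cdot\mathrm{tr}(\mathrm{Cov}(Y))$, which is the claim.

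\textbf{Obstacles and loose ends.} The computations are routine; the genuine points to nail down are (i) that both infima are attained, which holds because each is a convex quadratic program, finite under the assumption that $Y$ has finite second moments, with the regression infimum solved by the normal equations and equal to $\mathrm{tr}(\mathrm{Cov}(Y))-\mathrm{tr}(\mathrm{Cov}(Y,X)\mathrm{Cov}(X)^{-1}\mathrm{Cov}(X,Y))$; and (ii) that $\F$ is a legitimate predictive family. For (ii) I would verify optional ignorance (Definition~\ref{def:predfamily}): the $\emptyset$-branch and $X$-branch of each $f$ are parameterized independently, and every element of $\mathrm{range}(f)$ is some $\mathcal{N}(\nu,\Sigma)$, so taking the constant map $\phi'\equiv\nu$ together with $\mu'=\nu$ yields $f'\in\F$ with $f'[x]=f'[\emptyset]=\mathcal{N}(\nu,\Sigma)$ for all $x$. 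The one subtlety worth flagging is that ``linear'' must be read as affine, so that $\Phi$ contains constant maps (needed for optional ignorance) and the resulting $R^2$ is the usual intercept-inclusive coefficient of determination rather than regression through the origin.
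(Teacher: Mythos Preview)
Your proposal is correct and follows essentially the same route as the paper: both expand the Gaussian negative log-likelihood with $\Sigma=\tfrac12 I$ into a squared error plus a parameter-free constant, identify the two infima with $\mathrm{tr}(\mathrm{Cov}(Y))$ and the regression residual $\mathrm{RSS}$, and read off $I_\F(X\to Y)=\mathrm{tr}(\mathrm{Cov}(Y))(1-\mathrm{RSS}/\mathrm{tr}(\mathrm{Cov}(Y)))=R^2\cdot\mathrm{tr}(\mathrm{Cov}(Y))$. Your additional checks (attainment of the infima, optional ignorance of $\F$, and the affine-vs.-linear caveat needed so that constant maps lie in $\Phi$) go beyond what the paper spells out but are useful and do not change the argument.
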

The trace of covariance represents a natural notion of uncertainty -- for example, a random variable with zero variance (when $d=1$,$\mathrm{tr}\left(\mathrm{Cov}(Y)\right)=\Var(Y)$)) is trivial to predict. Proposition~\ref{prop:subcase}.3 shows that the trace of covariance corresponds to a notion of surprise (in the Shannon sense) for an agent \emph{restricted} to make predictions using certain Gaussian models. More broadly, a similar analogy can be drawn for other exponential families of distributions. In the same spirit, the coefficient of determination, also known as the fraction of variance explained, represents a natural notion of informativeness for computationally bounded agents. Also note that in the case of Proposition~\ref{prop:subcase}.4, the $\F$-entropy is invariant if the expected sufficient statistics remain the same.

% \s{these definitions seem incomplete. we need to specify how these functions $f$ act on their inputs $x$. for entropy, only $\n$ matters of course, but the def is incomplete}

%\yx{discuss 3,4}
%Proposition 1 connects $\F$ entropy and $\F$ information to many well known measurements of uncertainty and predictiveness. 

% Note that for the $R^2$ case, $I_\F(X \rightarrow Y)$ is an increasing function in the relevance of $X$ to $Y$, achieves a maximal value of 1, and takes the value 0 if $X$ is independent of $Y$. But unlike mutual information, there exist random variables $X$ and $Y$ which are dependent yet uncorrelated, and have $I_\F(X \rightarrow Y) = 0$. This is possible because when $\bar{f}(x) = Wx+b$, the optimization occurs over only simple, affine functions in $\F$, rather than all functions in $\Omega$. \s{inconsistent with statement in prop 1}

\section{Properties of $\F$-information}
\label{sec:properties}
%In this section we show some desirable properties of $\F$-information for machine learning. 
%For some basic properties of $\F$-entropy and $\F$-information, such as the non-negativity and relationship to Shannon entropy and mutual information, \s{list others in words. relationship to shannon, } please refer to Proposition~\ref{prop:property} in Appendix~\ref{basic}.

\subsection{Elementary Properties}
% \sj{Is it better to list some basic properties here because currently we only state difference to Shannon information, but also say that some reasonable properties of Shannon information are still true.}
We first show several elementary properties of $\F$-entropy and $\F$-information. In particular, $\F$-information preserves many properties of Shannon information that are desirable in a machine learning context. For example, mutual information (and $\F$-information) should be non-negative as conditioning on additional side information $X$ should not reduce an agent's ability to predict $Y$.

\begin{restatable}{prop}{property} 
\label{prop:property}
Let  $Y$ and $X$ be any random variables on $\Y$ and $\X$, and $\F$ and $\U$ be any predictive families, then we have
\begin{enumerate}
\item \textbf{Monotonicity}: If $\F \subseteq \U$, then $H_{\F}(Y) \geq H_{\U}(Y)$,  $H_{\F}(Y \mid X) \geq H_{\U}(Y \mid X)$.
\item \textbf{Non-Negativity}: $I_{\F}(X \rightarrow Y) \ge 0$.
\item \textbf{Independence}: If $X$ is independent of  $Y$, $I_{\F}(X \rightarrow Y) = I_{\F}(Y \rightarrow X) = 0$.

\end{enumerate}
\end{restatable}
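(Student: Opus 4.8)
The plan is to treat the three parts in order, with the optional-ignorance axiom Eq.~(\ref{eq:optional-ignorance}) carrying the real weight. Monotonicity is immediate: both $H_{\F}(Y)$ and $H_{\F}(Y\mid X)$ are infima of a fixed objective over the family $\F$, and enlarging the family to $\U\supseteq\F$ only enlarges the feasible set, which can only lower the infimum; no property of $\F$ beyond set inclusion is used. For Non-Negativity I must show $H_{\F}(Y\mid\n)\ge H_{\F}(Y\mid X)$. I would fix any $f\in\F$ and apply optional ignorance with $P=f[\n]\in\range(f)$ to obtain $f'\in\F$ with $f'[x]=f[\n]$ for every $x$; since $f'[x]$ is constant in $x$, the joint expectation collapses to the marginal, $\E_{x,y}[-\log f'[x](y)]=\E_y[-\log f[\n](y)]$. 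As $f'\in\F$, the left-hand side upper-bounds $H_{\F}(Y\mid X)$, and taking the infimum over $f$ on the right gives $H_{\F}(Y\mid X)\le H_{\F}(Y\mid\n)$, i.e.\ $I_{\F}(X\to Y)\ge 0$.

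For Independence, non-negativity already supplies one inequality, so it suffices to prove the reverse, $H_{\F}(Y\mid X)\ge H_{\F}(Y\mid\n)$, under $X\perp Y$. Here I would fix an arbitrary $f\in\F$ and use independence to factor the objective as $\E_{x,y}[-\log f[x](y)]=\E_x\big[\E_y[-\log f[x](y)]\big]$, where the inner expectation is taken against the \emph{marginal} of $Y$ precisely because $Y\perp X$. For each fixed $x$ the measure $f[x]\in\range(f)$ is, by optional ignorance, realizable as $g[\n]$ for some $g\in\F$, so the inner expectation is at least $H_{\F}(Y\mid\n)$. This lower bound is the same constant for every $x$, so averaging over $x$ preserves it, and the infimum over $f$ yields $H_{\F}(Y\mid X)\ge H_{\F}(Y\mid\n)$. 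Combined with non-negativity this forces $I_{\F}(X\to Y)=0$; since independence is symmetric, the same argument with $X$ and $Y$ interchanged gives $I_{\F}(Y\to X)=0$.

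The delicate point is this reverse inequality in Independence. Without $X\perp Y$ the factorization fails, and the inner expectation would instead be taken against the conditional law $Y\mid X=x$ rather than the marginal; a predictor tuned to that conditional cannot be compared against $H_{\F}(Y\mid\n)$, so independence is genuinely used and not cosmetic. The only subtlety is ensuring the fiberwise application of optional ignorance survives the outer expectation over $x$, but because the bound $\E_y[-\log f[x](y)]\ge H_{\F}(Y\mid\n)$ holds for every $x$ with a common constant on the right, integrating against the law of $X$ is immediate and no measurable-selection argument for $g=g_x$ is required.
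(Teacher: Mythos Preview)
Your proposal is correct and follows essentially the same route as the paper's proof. The only cosmetic difference is that, for parts (2) and (3), the paper packages the argument via the auxiliary subfamily $\F_\emptyset=\{f\in\F: f[x]=f[\emptyset]\ \forall x\}$ together with an $\inf$--expectation swap, whereas you work elementwise with a fixed $f$ and bound pointwise in $x$ before taking the infimum; the underlying use of optional ignorance and of independence is identical.
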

The \emph{optional ignorance} requirement in Eq.(\ref{eq:optional-ignorance}) is a technical condition needed for these properties to hold. Intuitively, it guarantees that conditioning on side information does not restrict the class of densities the agent can use to predict $Y$. This property is satisfied by many existing machine learning models, often by setting some weights to zero 
%except the bias term in the last layer 
so that an input is effectively ignored.

% \s{cut? this seems handwavy and unnecessary} From an empirical perspective, while a classifier predicting $X$ from $Y$ and a classifier predicting $Y$ from $X$ would have access to an equal number of training examples $(x, y)$, there is no reason to believe that regularization and parameterization techniques would perform equally well. Based on the current state-of-the-art techniques, a discriminative convolutional neural net classifying images ~\citep{he2016deep}, and a conditional GAN producing images~\citep{mirza2014conditional}  have very different training regimen and performance characteristics.

% The $\F$-Information between two variables $X$ and $Y$ can be asymmetric even when both variables have the same type. For an example where $X$ and $Y$ are variables in $\mathbb{R}$, consider $\F_{X \rightarrow P(Y)}$ as the class of Gaussian functions with mean $\alpha * x + \beta$ and standard deviation $\sigma(x)$ with Lipschitz constant $1$ - that is, we allow the standard deviation to vary with $x$. Letting $\F^{\emptyset}$ be the class of Gaussians with mean $\mu$ and standard deviation $\sigma_0$, we will find that for most $X$ and $Y$, $I_\F(X \rightarrow Y) \neq I_\F(Y \rightarrow X)$. A simple example is $X \sim [0, 1], Y | X \sim \lfloor{10x}\rfloor / 10$, where predicting $Y$ from $X$ with an affine mean can take advantage of the extra flexibility in $\sigma(x)$, but predicting $X$ from $Y$ sets $\sigma(y)$ to the constant value $\frac{\sqrt{12}}{10}$ and sees no benefit. In this example, $I_\F(Y \rightarrow X) = x.x > I_\F(X \rightarrow Y) = y.y$.

\subsection{On the production of information through preprocessing}
%\sj{I made some significant removal of seemingly redundant information}

The Data Processing Inequality guarantees that computing on data cannot increase its mutual information with other random variables. Formally, letting $t: \X \rightarrow \X$ be any function, $t(X)$ cannot have higher mutual information with $Y$ than $X$: $I(t(X); Y) \leq I(X; Y)$.
% \begin{align}
% I(t(X); Y)& = I_\Omega(Y; t(X)) 
% = H_\Omega(Y) - H_{\{g \circ t, g \in \Omega\}}(Y | X) 
%  \leq H_\Omega(Y) - H_\Omega(Y| X) 
% = I(X; Y)
% \end{align}
%
%where the inequality holds as $\{g \circ t | g \in \Omega\}  \subset \Omega$. 
%Thus, preprocessing cannot increase the information in the Shannon sense.
But is this property desirable? In analyzing optimal communication, yes - it demonstrates a fundamental limit to the number of bits that can be transmitted through a communication channel.
%, and proves the impossibility of designing smart ways to process the data and perform better than this limit. 
%Any practical communication scheme can be compared with this fundamental limit 
%any preprocessing can simply be embedded into the encoder, and thus one wants the encoder input to be ``as close to the raw data as possible''. 
However, we argue that in machine learning settings this property is less appropriate.

Consider an RSA encryption scheme where the public key is known. Given plain text and its corresponding encrypted text $X$, if we have infinite computation, we can perfectly compute one from the other. Therefore, the plain text and the encrypted text should have identical Shannon mutual information with respect to any label $Y$ we want to predict. However, to any human (or machine learning algorithm), it is certainly easier to predict the label from the plain text than the encrypted text. In other words, decryption increases a human's ability to predict the label: processing increases the ``usable information''.  
More formally, denoting $t$ as the decryption algorithm and $\F$ as a class of natural language processing functions, 
%capable of processing plain text and produce a more informed distribution over labels than the naive prior. So 
we have that:
%\begin{align}
$I_\F(t(X) \rightarrow Y) > I_\F(X \rightarrow Y) \approx 0$.
%\end{align}

As another example, consider the mutual information between an image's pixels and its label. Due to data processing inequality, we cannot expect to use a function to map raw pixels to ``features'' that have higher mutual information with the label. However, the fundamental principle of representation learning is precisely the ability to learn predictive features --- functions of the raw inputs that enable predictions with higher accuracy. %This is a direct contradiction to the guiding principle of Shannon information: processing does increase usable information ($\F$ information).
Because of this key difference between $\F$-information and Shannon information, machine learning practices such as representation learning can be justified in the information theoretic context. 
%
%\yx{need to use the same $\F$ in this case. i think maybe we should emphasize more about increase of information is due to the the choice of specific $\F$? }
%
%\sj{Maybe cut the following if we run out of space?} \citet{Tschannen2019OnMI} show that the performance of linear classification increases even when the raw images are applied with an invertible transformation, while the Shannon mutual information remains unchanged. However, the theory of $\F$-information can easily explain these phenomenons.
%\s{maybe add a sentence about features from a deep net being ``more informative'' than raw pixels?}

\begin{comment}
Similar arguments also apply to $\F$-entropy. For deterministic map $t: \Y \to \Y$, it is possible that $H_\F(t(Y)) > H_\F(Y)$ (which is impossible for Shannon information when $\Y$ is discrete). We give more concrete examples in Appendix \ref{ex:entropy}. 
\end{comment}
%Likely, $\Omega$-entropy, pre-applying deterministic map $t: Y \rightarrow Y$ can actually increase uncertainty measured by $\F$-entropy. This is a more intuitive property for many types of information. We give a concrete example in Appendix \ref{ex:entropy}. 

\subsection{On the asymmetry of predictive $\F$-Information}

% \s{maybe swap the order of data processing and asymmetry? i find the data processing example more believable. but don't feel strongly about this }

% TODO: one way functions

$\F$-information also captures the intuition that sometimes, it is easy to predict $Y$ from $X$ but not vice versa. In fact, modern cryptography is founded on the assumption that certain functions $h: \X \to \Y$ are one-way, meaning that there exists an polynomial algorithm to compute $h(x)$ but no polynomial algorithm to compute $h^{-1}(y)$. This means that if $\F$ contains all polynomial-time computable functions, then $I_\F(X \to h(X)) \gg I_\F(h(X) \to X)$. % As another example, if we would like to predict 

This property is also reasonable in the machine learning context. For example, several important methods for causal discovery~\citep{peters2017elements} rely on this asymmetry: if $X$ causes $Y$, then usually it is easier to predict $Y$ from $X$ than vice versa; another commonly used assumption is that $Y|X$ can be accurately modeled by a Gaussian distribution, while $X|Y$ cannot~\citep{Pearl2000CausalityMR}. %\s{citation for this?}

\section{PAC Guarantees for $\F$-information Estimation}
%\s{this section comes out of nowhere. you need text to connect with previous part. basically, limiting computation gives us tractability. also introduce the problem. there is data, we want to estimate MI. explain how MI is estimate, the problems. most of this stuff can go to appendix (people will believe MI is hard to estimate). then show your results}

\newcommand{\nwj}{I_\mathrm{NWJ}}
\newcommand{\mine}{I_\mathrm{MINE}}
\newcommand{\cpc}{I_\mathrm{CPC}}
\newcommand{\ba}{I_\mathrm{BA}}

%\js{use the shorthand notations, e.g. $\nwj$}

For many practical applications of mutual information (e.g., structure learning), we do not know the joint distribution of $X, Y$, so cannot directly compute the mutual information. Instead we only have samples $\lbrace (x_i, y_i) \rbrace_{i=1}^N \sim X,Y$ and need to estimate mutual information from data. %sMany important applications of mutual information (such as structure learning, information maximization, etc) require efficiency estimation of mutual information from finite data. 

Shannon information is notoriously difficult to estimate for high dimensional random variables. Although non-parametric estimators of mutual information exist~\citep{PhysRevE.69.066138,Darbellay1999EstimationOT,Gao2017EstimatingMI}, these estimators do not scale to high dimensions.
%with finite samples. 
%In fact, mutual information is impossible to estimate without variational assumptions~\citep{mcallester2018formal}. \s{citation for this?}\js{this statement cannot be correct? you can give an estimate however you want, only that the estimator might have very large bias/variance?}
Several variational estimators for Shannon information have been recently proposed~\citep{Oord2018RepresentationLW,Nguyen2010EstimatingDF,Belghazi2018MutualIN}, but have two shortcomings: due to their variational assumptions,  
their bias/variance tradeoffs 
%due to variational approximations 
are poorly understood  
%they implicitly involve assumptions by choosing a variational approximation,
%that are hard to interpret and understand\s{this seems like a subjective statement. make objective or remove}, 
and 
%we will show that 
they are still not efficient enough for high dimensional problems. For example, the CPC estimator suffers from large bias, since its estimates saturate at $\log N$ where $N$ is the batch size~\citep{Oord2018RepresentationLW,poole2019variational}; the NWJ estimator suffers from large variance that grows at least exponentially in the ground-truth mutual information~\citep{song2019understanding}. 
Please see Appendix~\ref{anal:approx} for more details and proofs.
%We provide proofs to these arguments in \js{where} to be self-contained. \js{we can also add a proof to CPC which should not be too difficult}

On the other hand, $\F$-information is explicit about the assumptions (as a feature instead of a bug). $\F$-information is also easy to estimate with guarantees if we can bound the complexity of $\F$ (such as its Radamacher or covering number complexity) As we will show, bounds on the complexity of $\F$ directly translate to PAC~\citep{valiant1984theory}
%\js{cite valiant?} 
bounds for $\F$-information estimation. In practice, we can efficiently optimize over $\F$, e.g., via gradient descent. In this paper we will present the Rademacher complexity version; other complexity measures (such as covering number) can be derived similarly.

\begin{definition}[Empirical $\F$-information] Let $X,Y$ be two random variables taking values in $\X,\Y$ and $\mathcal{D} = \lbrace (x_i, y_i) \rbrace_{i=1}^N \sim X,Y$ denotes the set of samples drawn from the joint distribution over $\X$ and $\Y$. $\F$ is a predictive family.
% Take $\hat{f} =\argmin\limits_{f \in \F} \frac{1}{|\mathcal{D}|} \sum\limits_{x_i, y_i \in \mathcal{D}}-\log f[x_i](y_i)$ and $\hat{f}^\n =\argmin\limits_{f \in \F} \frac{1}{|\mathcal{D}|} \sum\limits_{x_i, y_i \in \mathcal{D}}-\log f[\n](y_i)$, we can estimate $\F$-information as 
The empirical $\F$-information (under $\gD$) is the following $\F$-information under the empirical distribution defined via $\gD$: 
\begin{align}
\hat{I}_\mathcal{V}(X\to Y; \mathcal{D}) = \inf_{f \in \F} \frac{1}{|\mathcal{D}|} \sum_{y_i \in \mathcal{D}} \log \frac{1}{f[\n](y_i)} - \inf_{f \in \F}\frac{1}{|\mathcal{D}|} \sum_{x_i, y_i \in \mathcal{D}}\log \frac{1}{f[x_i](y_i)}
\end{align}
% assuming the above optimization problems can be solved. 
\end{definition}
Then we have the following PAC bound over the empirical $\F$-information:
\begin{restatable}{theorem}{thmpac}
\label{thm:pac}

%The relationship between $\F^{\n}$ and $\F$ should obeys the definition \ref{def:f-information} of predictive $\F$-information. 
%Following the same assumption in theorem \ref{thm:condent} and corollary \ref{cor:condent},
Assume $\forall f \in \mathcal{V},x \in \X,y \in \Y, \log{f[x](y)} \in [-B,B]$. Then 
for any $\delta \in (0,0.5)$, with probability at least $1-2\delta$, we have:
\begin{align}
\left|I_\mathcal{V}(X\to Y)-\hat{I}_\mathcal{V}(X\to Y; \mathcal{D}) \right| \le 4{\mathfrak{R}}_{|\mathcal{D}|}(\mathcal{G}_{\F}) + 2B\sqrt{\frac{2\log{\frac{1}{\delta}}}{|\mathcal{D}|}}
\end{align}
where we define the function family $\mathcal{G}_{\F} = \{g|g(x,y) = \log f[x](y), f \in \mathcal{V}\}$, and ${\mathfrak{R}}_{N}(\mathcal{G})$ denotes the Rademacher complexity of $\mathcal{G}$ with sample number $N$. %~\citep{Bartlett2001RademacherAG}.
\end{restatable}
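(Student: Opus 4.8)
The plan is to decompose the estimation error of $I_\F$ into the sum of two entropy-estimation errors and control each by a uniform-deviation bound driven by the Rademacher complexity of $\mathcal{G}_\F$. Writing $\hat H_\F(Y\mid X) = \inf_{f\in\F}\frac{1}{|\mathcal{D}|}\sum_i -\log f[x_i](y_i)$ and $\hat H_\F(Y\mid\emptyset) = \inf_{f\in\F}\frac{1}{|\mathcal{D}|}\sum_i -\log f[\n](y_i)$ for the two empirical infima, so that $\hat I_\F(X\to Y;\mathcal{D}) = \hat H_\F(Y\mid\emptyset) - \hat H_\F(Y\mid X)$, the triangle inequality gives
\begin{align*}
\left|I_\F(X\to Y) - \hat I_\F(X\to Y;\mathcal{D})\right|
&\le \left|H_\F(Y\mid\emptyset) - \hat H_\F(Y\mid\emptyset)\right| \\
&\quad + \left|H_\F(Y\mid X) - \hat H_\F(Y\mid X)\right|.
\end{align*}
It then suffices to bound each term by $2\mathfrak{R}_{|\mathcal{D}|}(\mathcal{G}_\F) + B\sqrt{2\log(1/\delta)/|\mathcal{D}|}$ on an event of probability at least $1-\delta$, and to take a union bound over the two events, which accounts exactly for the $1-2\delta$ in the statement.

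Each entropy error is a difference of two infima over $\F$, so the key elementary step is the inequality $|\inf_f a_f - \inf_f b_f| \le \sup_f |a_f - b_f|$, valid for any bounded families $\{a_f\},\{b_f\}$. Applying it with $a_f = \mathbb{E}_{x,y}[-\log f[x](y)]$ and $b_f$ its empirical counterpart reduces the conditional term to $\sup_{g\in\mathcal{G}_\F}|\mathbb{E}[g] - \hat{\mathbb{E}}_\mathcal{D}[g]|$, since the loss $-\log f[x](y)=-g(x,y)$ differs from $g\in\mathcal{G}_\F$ only by sign (which leaves the absolute deviation and the Rademacher complexity unchanged) and takes values in $[-B,B]$. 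For the marginal term I would invoke \emph{optional ignorance} (Eq.~\ref{eq:optional-ignorance}): taking $P=f[\n]$, there is $f'\in\F$ with $f'[x]=f[\n]$ for all $x$ and $f'[\n]=f[\n]$, so the map $(x,y)\mapsto \log f[\n](y)$, constant in $x$, is a genuine element of $\mathcal{G}_\F$. Hence the marginal predictors embed into $\mathcal{G}_\F$ as a subclass whose population expectation under the joint law collapses to the $Y$-marginal expectation, and whose Rademacher complexity is therefore at most $\mathfrak{R}_{|\mathcal{D}|}(\mathcal{G}_\F)$ by monotonicity.

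To control each uniform deviation $\Psi(\mathcal{D}) = \sup_{g\in\mathcal{G}_\F}|\mathbb{E}[g] - \hat{\mathbb{E}}_\mathcal{D}[g]|$ I would use the standard two-step argument. Because every $g\in\mathcal{G}_\F$ lies in $[-B,B]$, replacing a single sample changes any empirical average, and hence $\Psi$, by at most $2B/|\mathcal{D}|$; McDiarmid's bounded-differences inequality then yields $\Psi \le \mathbb{E}[\Psi] + B\sqrt{2\log(1/\delta)/|\mathcal{D}|}$ with probability at least $1-\delta$. A symmetrization (ghost-sample) argument bounds $\mathbb{E}[\Psi] \le 2\mathfrak{R}_{|\mathcal{D}|}(\mathcal{G}_\F)$. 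Combining these and summing the two entropy terms produces exactly $4\mathfrak{R}_{|\mathcal{D}|}(\mathcal{G}_\F) + 2B\sqrt{2\log(1/\delta)/|\mathcal{D}|}$.

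The main obstacle is packaging the two-sided control cleanly so that the whole estimate costs only two concentration events (hence $1-2\delta$) rather than four. The delicate point is that bounding $|\inf_f a_f - \inf_f b_f|$ genuinely requires both one-sided deviations $\sup_f(a_f-b_f)$ and $\sup_f(b_f-a_f)$; I would handle this by concentrating the absolute-deviation supremum $\Psi$ as a single function via McDiarmid (its bounded-difference constant is unchanged) together with the absolute-value form of symmetrization, so that each entropy term consumes only one failure event. The remaining care is in the marginal-term embedding: one must verify that the optional-ignorance witness $f'$ reproduces $\log f[\n]$ as an element of $\mathcal{G}_\F$ and that its expectation under the joint law equals the $Y$-marginal expectation, which is immediate since $f'[x]$ does not depend on $x$.
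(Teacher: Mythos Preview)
Your proposal is correct and follows essentially the same route as the paper: the paper also proves two lemmas bounding $|H_\F(Y\mid X)-\hat H_\F(Y\mid X)|$ and $|H_\F(Y)-\hat H_\F(Y)|$ each by $2\mathfrak{R}_{|\mathcal{D}|}(\mathcal{G}_\F)+B\sqrt{2\log(1/\delta)/|\mathcal{D}|}$ via McDiarmid on the absolute-deviation supremum plus symmetrization, invokes optional ignorance to embed the marginal class into $\mathcal{G}_\F$, and then sums the two bounds with a union bound. Your use of the clean inequality $|\inf_f a_f-\inf_f b_f|\le \sup_f|a_f-b_f|$ replaces the paper's explicit sandwich between the population and empirical minimizers, but the content is identical.
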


Typically, the Rademacher complexity term satisfies ${\mathfrak{R}}_{|\mathcal{D}|}(\mathcal{G}_{\F}) = \mathcal{O}(|\D|^{-\frac{1}{2}})$~\citep{Bartlett2001RademacherAG,Gao2014DropoutRC}. It's worth noticing that a complex function family $\F$ (i.e., with large Rademacher complexity) could lead to overfitting. On the other hand, an overly-simple $\F$ may not be expressive enough to capture the relationship between $X$ and $Y$.
As an example of the theorem, we provide a concrete estimation bound when $\F$ is chosen to be linear functions mapping $\X$ to the mean of a Gaussian distribution. This was shown in Proposition~\ref{prop:subcase} to lead to the coefficient of determination. 
\begin{restatable}{corollary}{corrsquare}
\label{cor:rsquare}
Assume $\X = \lbrace x \in  \mathbb{R}^{d_x}, \lVert x \rVert_2 \le k_x \rbrace$ and $\Y = \lbrace y \in  \mathbb{R}^{d_y}, \lVert y \rVert_2 \le k_y \rbrace$. If 
\[ \F =  \{f: f[x] = \mathcal{N}(Wx + b, I), f[\n] = \mathcal{N}(c, I) , W\in\mathbb{R}^{d_y\times d_x}, b, c \in \mathbb{R}^{d_y},\lVert \left(\begin{matrix}W,b\end{matrix}\right) \rVert_2 \le 1 \rbrace \]

%$\F = \{f: \mathcal{X} \cup \lbrace \n \rbrace \rightarrow \mathcal{N}({\bf t}(\cdot), I)| f[x] = \mathcal{N}({\bf t}(x), I),x \in \X; f[\n] = \mathcal{N}(\mu, I),\mu \in range({\bf t})\}$, 
%where ${\bf t}(x) = w^T x + b, \lVert w \rVert_2 \le 1$. 
Denote $M=(k_x+k_y)^2 + \log{2\pi}$, then $\forall \delta \in (0,0.5)$, with probability at least $1-2\delta$:
\begin{align}
\left|I_\mathcal{V}(X\to Y)-\hat{I}_\mathcal{V}(X\to Y; \mathcal{D}) \right| \le \frac{M}{\sqrt{4|\D|}}\left(1+4\sqrt{2\log{\frac{1}{\delta}}}\right)\nonumber
\end{align}
\end{restatable}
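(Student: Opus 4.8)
The corollary is a concrete instantiation of Theorem~\ref{thm:pac}, so the plan is to verify the two hypotheses of the theorem for this particular $\F$ and then plug the resulting quantities into the general bound. Specifically, I need to (i) establish the boundedness constant $B$ such that $\log f[x](y) \in [-B,B]$ for all admissible $f$, $x$, $y$, and (ii) upper-bound the Rademacher complexity $\mathfrak{R}_{|\D|}(\gG_\F)$ of the associated log-density family. Once both are in hand, I substitute $B$ and the Rademacher bound into $4\mathfrak{R}_{|\D|}(\gG_\F) + 2B\sqrt{2\log(1/\delta)/|\D|}$ and simplify to match the stated $\frac{M}{\sqrt{4|\D|}}(1+4\sqrt{2\log(1/\delta)})$.

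\textbf{Step 1: Bounding the log-density.} For a Gaussian $\gN(\mu, I)$ in $\R^{d_y}$, we have $-\log f[x](y) = \frac{1}{2}\lVert y - \mu\rVert_2^2 + \frac{d_y}{2}\log 2\pi$, where $\mu = Wx+b$ (or $\mu = c$ in the no-side-information case). The constraint $\lVert(W,b)\rVert_2 \le 1$ together with $\lVert x\rVert_2 \le k_x$ gives $\lVert \mu\rVert_2 \le \lVert(W,b)\rVert_2 \cdot \lVert(x,1)\rVert$; I would bound $\lVert Wx+b\rVert_2 \le k_x$ (treating the appended constant coordinate carefully, or absorbing it so that $\lVert\mu\rVert_2 \le k_x$), and then $\lVert y-\mu\rVert_2 \le k_x + k_y$ by the triangle inequality. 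This yields a clean bound on the quadratic term of the form $(k_x+k_y)^2$, which—combined with the $\log 2\pi$ offset—suggests taking $B$ proportional to $M = (k_x+k_y)^2 + \log 2\pi$. The sign issue (the log-density is nonpositive once the normalizer dominates, but can be positive for small $d_y$) means I must be slightly careful to get a symmetric interval $[-B,B]$; the natural choice is $B = \tfrac{1}{2}M$ so that $|{-}\log f[x](y)| \le B$.

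\textbf{Step 2: Bounding the Rademacher complexity.} The family $\gG_\F = \{(x,y)\mapsto \log f[x](y)\}$ consists of functions that are affine in a quadratic form of $Wx+b$. The standard approach is to write each $g$ as a composition: a linear map $(W,b)$ applied to $x$, followed by the fixed Lipschitz function $y\mapsto -\tfrac{1}{2}\lVert y-\cdot\rVert_2^2 - \tfrac{d_y}{2}\log 2\pi$. I would invoke Talagrand's contraction lemma to peel off the Lipschitz squared-norm nonlinearity (its Lipschitz constant on the bounded domain is controlled by $k_x+k_y$), reducing the problem to the Rademacher complexity of the linear class $\{x\mapsto Wx+b : \lVert(W,b)\rVert_2\le 1\}$, which is the textbook $\mathcal{O}(k_x/\sqrt{|\D|})$ bound for norm-constrained linear predictors. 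Tracking the constants so that the contraction factor times the linear-class bound produces exactly $\mathfrak{R}_{|\D|}(\gG_\F) \le \frac{M}{c\sqrt{|\D|}}$ for the appropriate constant $c$ is the bookkeeping that makes the final numbers line up.

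\textbf{The main obstacle and final assembly.} The genuinely delicate part is Step~2: matching the Rademacher bound's constant precisely so that $4\mathfrak{R}_{|\D|}(\gG_\F)$ contributes the ``$1$'' and $2B\sqrt{2\log(1/\delta)/|\D|}$ contributes the ``$4\sqrt{2\log(1/\delta)}$'' inside the factor $\frac{M}{\sqrt{4|\D|}}$. Working backward from the target, $\frac{M}{\sqrt{4|\D|}} = \frac{M}{2\sqrt{|\D|}}$, so I need $4\mathfrak{R}_{|\D|}(\gG_\F) = \frac{M}{2\sqrt{|\D|}}$, i.e.\ $\mathfrak{R}_{|\D|}(\gG_\F) = \frac{M}{8\sqrt{|\D|}}$, and with $B = \tfrac{M}{2}$ the deviation term becomes $M\sqrt{2\log(1/\delta)/|\D|} = \frac{M}{2\sqrt{|\D|}}\cdot 2\sqrt{2\log(1/\delta)}$, which does not immediately give the $4$; this discrepancy tells me the intended constants (perhaps $B=M$ and a factor-of-two slack in the contraction step) need to be fixed by reconciling the two terms against the stated right-hand side. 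I expect the bulk of the effort to be this constant-chasing, after which the result follows by direct substitution into Theorem~\ref{thm:pac} and the high-probability event of the theorem carries over verbatim.
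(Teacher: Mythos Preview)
Your high-level strategy---instantiate Theorem~\ref{thm:pac} by producing explicit bounds on $B$ and on $\mathfrak{R}_{|\D|}(\gG_\F)$, then substitute---is exactly what the paper does. The difference lies in how the Rademacher term is controlled.

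You propose to bound $\mathfrak{R}_{|\D|}(\gG_\F)$ via Talagrand contraction: view each $g_{W,b}(x,y)$ as a fixed Lipschitz scalar function of the vector $Wx+b$, peel off that nonlinearity, and fall back on the standard Rademacher bound for norm-constrained linear predictors. This is viable, but note that the outer map $\mu\mapsto -\tfrac12\lVert y-\mu\rVert_2^2$ is applied to a \emph{vector}-valued inner class and also depends on the sample coordinate $y$, so you would need a vector-contraction inequality (e.g.\ Maurer-type) rather than the scalar Ledoux--Talagrand lemma, and the resulting constants are not entirely transparent. The paper takes a more pedestrian route: it expands $\lVert y_i - Wx_i - b\rVert_2^2$ into $\lVert y_i\rVert^2$, a cross term, and $\lVert Wx_i+b\rVert^2$, then bounds the Rademacher average of each piece separately by Jensen's inequality applied to $t\mapsto\sqrt{t}$ together with $\E[(\sum_i\sigma_i a_i)^2]=\sum_i a_i^2$. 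This avoids any contraction machinery and yields $\mathfrak{R}_{|\D|}(\gG_\F)\le M/\sqrt{4|\D|}$ directly. For $B$, the paper simply notes $|\log f[x](y)|\le \tfrac12(k_x+k_y)^2+\tfrac12\log 2\pi < M$, i.e.\ it takes $B=M$ rather than your $B=M/2$.

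Your worry about constant-matching is well founded: with the paper's own values $\mathfrak{R}\le M/(2\sqrt{|\D|})$ and $B\le M$, the $2B\sqrt{2\log(1/\delta)/|\D|}$ term reproduces the $4\sqrt{2\log(1/\delta)}$ factor exactly, but the $4\mathfrak{R}$ term gives $2M/\sqrt{|\D|}$ rather than $M/(2\sqrt{|\D|})$, so the leading ``$1$'' in the stated bound is off by a factor of four. This is a cosmetic slip in the corollary, not a flaw in your plan; your reverse-engineered target $\mathfrak{R}\le M/(8\sqrt{|\D|})$ is not what the paper actually proves.
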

Similar results can be obtained using other classes of machine learning models with known (Rademacher) complexity. 
% TODO: make concrete, use linear models, and compare with known results for estimating $R^2$
% \s{needs discussion. explain the math in english. and why is this theorem significant/useful?}
% \yx{do we need to discuss the choice of $\F$? Like if we choose complicated $\F$ will induce overfitting, however if choosing simple $\F$ may not extract meaningful information. }
% \s{yes, a brief discussion on that would be useful/insightful i think}

\section{Structure learning with $\F$-information}
%\subsection{Structure Learning}
Among many possible applications of $\F$-information, we show how to use it to perform structure learning with provable guarantees.
%\subsection{Construct Chow-Liu Trees with $\F$-Information}
The goal of structure learning is to learn a directed graphical model (Bayesian network) or undirected graphical model (Markov network) that best captures the (conditional) independence structure of an underlying data generating process. 
\begin{comment}
Suppose we have $k$ random variables $X_1, X_2 \cdots X_k \in \X_1 \times \X_2 \cdots \times \X_k$; we can select a set of graphs $G$ (e.g. the set of sparse graphs), where each graph vertex correspond to one variable $X_i$
for each graph $g \in G$, we define a parametric family of distributions on $\X_1 \times \X_2 \cdots \times \X_k$ parameterized by $\Theta(g)$. In theory any distribution (and choice of $\Theta(g)$) can be used, as long as it satisfies the conditional independence encoded by graph $g$. 
\s{cut some of this, and go straight to chow-liu}
%The requirement is that such that and define a set of graphical models as $\lbrace (g, \theta), g \in G, \theta \in \Theta(g) \rbrace$ where $\Theta(g)$ is the set of parameters of graph $g$.
The objective for structure learning can be (informally) written as 
\begin{align*} 
g^* = \arg\max_{g \in G} \max_{\theta \in \Theta(g)} \log p(\mathrm{data} \vert g, \theta) \numberthis\label{eq:mle_structure_learning} 
\end{align*}
%parameters  then under graphical model $g \in G$ we compute the optimal parameter 
%\sj{Maybe need a formal introduction to the objective of structure learning}\sj{i.e. the meaning of MLE}
\end{comment}
Structure learning is difficult in general, but if we restrict ourselves to certain set of graphs $G$, there are efficient algorithms. 
In particular, the Chow-Liu algorithm ~\citep{chow1968approximating} can efficiently learn tree graphs (i.e. $G$ is the set of trees). 
%is an allow the efficient construction of approximate joint distributions over variables since even for discrete random variables, 
%one needs to take samples more than the exponential of dimension to consistently estimate the joint distribution~\citep{Han2014MinimaxEO}.
%The Chow-Liu algorithm imposes tree structure (order-one dependence) on the joint distribution to reduce the sample complexity, and solve for the MLE under the constraint. 
%Assume we have random variables $\{X_i\}_{i=1}^m$,
\citet{chow1968approximating} show that the problem 
%in Eq.(\ref{eq:mle_structure_learning}) 
can be reduced to:
%the following expression:
\begin{align}
g^* = \argmax\limits_{g \in G_{\mathrm{tree}}} \sum\limits_{(X_i,X_j) \in \mathrm{edge}(g)} I(X_i,X_j) \label{eq:Tmi}
\end{align}
where $I(X_i, X_j)$ is the Shannon mutual information between variables $X_i$ and $X_j$.
%associated with the distribution of two nodes that are connected in tree $T$, and $e(T)$ is the edge set of $T$. 
In other words, it suffices to construct the maximal weighted spanning tree where the weight between two vertices is their Shannon mutual information. \citet{Chow1973ConsistencyOA} show that the Chow-Liu algorithm is consistent,
%(i.e., the optimum of Eq.(\ref{eq:Tmi}) converges to a optimum of MLE), 
 i.e, it recovers the true solution as the dataset size goes to infinity. 
%$ \to \infty$. % and the large deviation properties of the Chow-Liu algorithm. % estimated mutual information of any two nodes.
% \s{chow-liu also learns a directed graph}
%
% Similar structure learning algorithms also exist for learning directly graphical models. 
% \sj{TODO: Write the Chu-Liu objective}
%
%The Chow-Liu algorithm is widely used in machine learning such as classification and dimensionality reduction, and as a foundation for algorithm design in more complex dependence structures~\citep{Zhou2011StructureLO}, in the theory of learning graphical models~\citep{Wainwright2008GraphicalME, Koller2009ProbabilisticGM} and in systems biology~\citep{Meyer2007InformationTheoreticIO}. \sj{Maybe move the previous statement to related works section}
%
However, the finite sample behavior of the Chow-Liu algorithm for high dimensional problems is much less studied, %involving finitely many
%samples 
due to the difficulty of estimating mutual information. In fact, we show in our experiments that the empirical performance is often poor, even with state-of-the-art estimators.  
% \sj{I think the following argument is not convincingly presented.}
Additionally, methods based on mutual information cannot take advantage of intrinsically asymmetric relationships,  
%of two random variables when constructing the tree, 
which are common for example in gene regulatory networks~\citep{Meyer2007InformationTheoreticIO}. 

%\s{emphasize somewhere that the difficult is that f-info is not symmetric} 
To address these issues, we propose a new 
structure learning algorithm based on 
%algorithm to construct the Chow-Liu tree with 
$\F$-information instead of Shannon information. 
The idea is that we can associate to each \emph{directed} edge in $G$ (i.e., each pair of variables) a suitable predictive family $\F_{i,j}$ (cf. Def \ref{def:predfamily}).
The main challenge is that we cannot simply replace mutual information with $\F$-information in Eq. \ref{eq:Tmi} because $\F$-information is asymmetric -- we now have to optimize over \emph{directed} trees:
%The idea 
%behind our algorithm 
%is that we can express the log-likelihood of the data under the joint distribution both by the order of directed tree and the function family $\F$, and similarly we show that the MLE under such setting 
%can be reduced to the following expression (Appendix \ref{mle:F}): 
\begin{align}
    g^* =  \argmax\limits_{g \in G_{\mathrm{d-tree}}} \sum_{i=2}^m I_{\F_{t(g)(i),i}}(X_{t(g)(i)} \rightarrow X_{i}) \label{eq:Tchow}
\end{align}
where $G_{\mathrm{d-tree}}$ is the set of directed trees, and $t(g): \mathbb{N} \rightarrow \mathbb{N}$ is the function mapping each non-root node of directed tree $g$ to its parent, and $\F_{i,j}$ is the predictive family  for random variables $X_i$ and $X_j$. After estimating $\F$-information on each edge, we use the Chu-Liu algorithm~\citep{Chu1965} to construct the \emph{maximal directed spanning tree}. This allows us to solve (\ref{eq:Tchow}) exactly, even though there is a combinatorially large number of trees to consider. Pseudocode is summarized in Algorithm \ref{alg} in Appendix. Denote $C(g) = \sum_{i=2}^m I_{\F_{t(g)(i), i}}(X_{t(g)(i)} \rightarrow X_{i})$, we show in the following theorem that unlike the original Chow-Liu algorithm, our algorithm has guarantees in the finite samples regime, even in continuous settings:

\begin{restatable}{theorem}{thmtree}
\label{thm:tree}
Let $\{X_i\}_{i=1}^m$ be the set of m random variables, $\D_{i,j}$ (resp. $\D_j$) be the set of samples drawn from $ P(X_i,X_j)$ (resp. $P(X_j)$). Denote the optimal directed tree with maximum expected edge weights sum $C(g)$ as $g^*$ and the optimal directed tree constructed on the dataset $\D$ as $\hat{g}$. Then with the assumption in theorem 1, for any $\delta \in (0,\frac{1}{2m(m-1)})$, with probability at least $1-2m(m-1)\delta$, we have:
\begin{align}
    C(\hat{g}) \ge C(g^*)- 2(m-1)\max\limits_{i,j}\left\{ 2{\mathfrak{R}}_{\mathcal{D}_{i,j}}(\mathcal{G}_{\F_{i,j}})+2{\mathfrak{R}}_{\mathcal{D}_{j}}(\mathcal{G}_{\F_{j}}) + B\sqrt{2\log{\frac{1}{\delta}}}(|\mathcal{D}_j|^{-\frac12}+|\mathcal{D}_{i,j}|^{-\frac12})\right\}
\end{align}
\end{restatable}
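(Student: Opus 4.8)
The plan is to reduce the theorem to three ingredients: a per-edge concentration bound (a dataset-split refinement of Theorem~\ref{thm:pac}), a union bound over all directed edges, and a standard ``optimization transfer'' argument that exploits the fact that every directed tree on $m$ nodes has exactly $m-1$ edges.

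First I would establish the per-edge bound. For an ordered pair $(i,j)$ the edge weight decomposes as $I_{\F_{i,j}}(X_i\to X_j)=H_{\F_j}(X_j)-H_{\F_{i,j}}(X_j\mid X_i)$, the difference of a marginal $\F$-entropy of the target node (estimated on $\D_j$) and a conditional $\F$-entropy (estimated on $\D_{i,j}$). Since each of these quantities is an infimum of empirical averages of losses $g(x,y)=\log f[x](y)\in[-B,B]$, I would apply the symmetrization/Rademacher argument together with McDiarmid's bounded-difference inequality exactly as in the proof of Theorem~\ref{thm:pac}, but to each term separately, obtaining with probability at least $1-\delta$ each
\begin{align*}
\left|H_{\F_j}(X_j)-\hat H_{\F_j}(X_j;\D_j)\right| &\le 2\mathfrak{R}_{\D_j}(\mathcal{G}_{\F_j})+B\sqrt{\tfrac{2\log(1/\delta)}{|\D_j|}},\\
\left|H_{\F_{i,j}}(X_j\mid X_i)-\hat H_{\F_{i,j}}(X_j\mid X_i;\D_{i,j})\right| &\le 2\mathfrak{R}_{\D_{i,j}}(\mathcal{G}_{\F_{i,j}})+B\sqrt{\tfrac{2\log(1/\delta)}{|\D_{i,j}|}}.
\end{align*}
Combining these by the triangle inequality bounds the per-edge estimation error $\big|I_{\F_{i,j}}(X_i\to X_j)-\hat I_{\F_{i,j}}(X_i\to X_j)\big|$ by exactly the quantity $\epsilon_{i,j}:=2\mathfrak{R}_{\D_{i,j}}(\mathcal{G}_{\F_{i,j}})+2\mathfrak{R}_{\D_j}(\mathcal{G}_{\F_j})+B\sqrt{2\log(1/\delta)}\,(|\D_j|^{-1/2}+|\D_{i,j}|^{-1/2})$ that appears inside the maximum in the statement.

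Next I would union-bound and transfer the optimization. Attributing to each of the $m(m-1)$ directed edges its two concentration events (one for the target-node entropy, one for the conditional entropy) gives at most $2m(m-1)$ events, so for $\delta<\tfrac{1}{2m(m-1)}$ all per-edge bounds $|I_{\F_{i,j}}-\hat I_{\F_{i,j}}|\le\epsilon_{i,j}\le\epsilon_{\max}$ hold simultaneously with probability at least $1-2m(m-1)\delta$, where $\epsilon_{\max}=\max_{i,j}\epsilon_{i,j}$. On this event, writing $\hat C(g)=\sum_{i=2}^m\hat I_{\F_{t(g)(i),i}}(X_{t(g)(i)}\to X_i)$ and using that every directed tree has exactly $m-1$ edges, we get $|C(g)-\hat C(g)|\le(m-1)\epsilon_{\max}$ for every tree $g$. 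Since the Chu--Liu algorithm returns the exact empirical maximizer $\hat g$, so $\hat C(\hat g)\ge\hat C(g^*)$, the chain
\begin{align*}
C(\hat g)\ \ge\ \hat C(\hat g)-(m-1)\epsilon_{\max}\ \ge\ \hat C(g^*)-(m-1)\epsilon_{\max}\ \ge\ C(g^*)-2(m-1)\epsilon_{\max}
\end{align*}
yields the claimed inequality.

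The main obstacle I anticipate is not the optimization-transfer step, which is routine, but the dataset-split refinement of Theorem~\ref{thm:pac}: that theorem bounds the \emph{combined} $\F$-information error on a single dataset, whereas here the entropy and conditional-entropy terms are estimated on different datasets $\D_j$ and $\D_{i,j}$ of possibly different sizes and with different families. I must therefore re-derive the two concentration statements independently, verify that the bounded-loss condition $\log f[x](y)\in[-B,B]$ holds uniformly over every $\F_{i,j}$ and $\F_j$, and check that the slightly loose per-edge attribution of events (which produces the factor $2m(m-1)$ rather than the tighter $m+m(m-1)$ obtainable by sharing the $m$ marginal-entropy events) is acceptable; this bookkeeping is exactly what matches the probability $1-2m(m-1)\delta$ in the statement.
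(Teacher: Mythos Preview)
Your proposal is correct and follows essentially the same route as the paper: separate Rademacher/McDiarmid concentration for $H_{\F_j}(X_j)$ and $H_{\F_{i,j}}(X_j\mid X_i)$ (these are the paper's Lemmas~\ref{lemma:condent} and~\ref{lemma:ent}), a union bound over the $2m(m-1)$ events, and the same three-step optimization-transfer chain $C(\hat g)+(m-1)\epsilon_{\max}\ge \hat C(\hat g)\ge \hat C(g^*)\ge C(g^*)-(m-1)\epsilon_{\max}$. Your remark that one could tighten the probability to $1-(m+m(m-1))\delta$ by sharing the marginal-entropy events is correct but, as you note, the paper chooses the looser $2m(m-1)$ count.
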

% Denote $\hat{I}_{\F}(X_i \to X_j;\D)$ as the estimation of $I_{\F}(X_i \to X_j)$ on the dataset $\D$, 
% \begin{algorithm}[htbp]
% \caption{Construct Chow-Liu Trees with $\F$-Information}
% \label{alg}
% \begin{algorithmic}[1]
% \REQUIRE $\mathcal{D} = \{\hat{X}_i\}_{i=1}^m$, with each $\hat{X}_i$ being a set of datapoints sampled from the underlying distribution of random variable $X_i$. The set of function families $\{\F_{i,j}\}_{i,j=1,i\not = j}^m$
% \FOR{$i=1,\dots, m$}
% \FOR{ $j=1 ,\dots, m$}
% \IF{$i \not = j$}
% \STATE Calculate the edge weight: $e_{i\to j} = \hat{I}_{\F_{i,j}}(X_i \to X_j;\{\hat{X}_i,\hat{X}_j\})$.
% \ENDIF
% \ENDFOR
% \ENDFOR
% \STATE Construct the fully connected graph $G=(V,E)$, with node set $V=(X_1,\dots,X_m)$ and edge set $E= \{e_{i\to j}\}_{i,j=1,i\not = j}^m$.
% \STATE Construct the maximal directed spanning tree $g$ on $G$ by Chu-Liu algorithm.
% \RETURN $g$
% \end{algorithmic}
% \end{algorithm}

Theorem \ref{thm:tree} shows that the total edge weights of the maximal directed spanning tree constructed by algorithm \ref{alg} would be close to the optimal total edge weights if the Rademacher term is small. Although larger $C(g)$ does not necessarily lead to better Chow-Liu trees, empirically we find that the optimal tree in the sense of equation (\ref{eq:Tchow}) is consistent with the optimal tree in equation (\ref{eq:Tmi}) under commonly used $\F$.

\section{Experimental results}
\subsection{Structure learning with continuous high-dimensional data}
%To further understand the advantage of the proposed algorithm based on $\F$-information, we conduct some synthetic experiments on high dimensional continuous settings. 
%We conduct synthetic structure learning experiments using Algorithm~\ref{alg}. 
We generate synthetic data using various ground-truth tree structures $g^*$ with between $7$ and $20$ variables, where each variable is 10-dimensional.  We use Gaussians, Exponentials, and Uniforms as ground truth edge-conditionals. We use $\F$-information(Gaussian) and $\F$-information(Logistic) to denote Algorithm~\ref{alg} with two different $\F$ families. Please refer to Appendix~\ref{exp:chowliu} for more details.  We compare with the original Chow-Liu algorithm equipped with 
state-of-the-art 
%some high dimensional Shannon 
mutual information estimators: \textbf{CPC}~\citep{Oord2018RepresentationLW}, \textbf{NWJ}~\citep{Nguyen2010EstimatingDF} and \textbf{MINE}~\citep{Belghazi2018MutualIN}, with the same neural network architecture as the $\F$-families for fair comparison. All the experiments are repeated for 10 times. As a performance metric, we use the wrong-edges-ratio (the ratio of edges that are different from ground truth) as a function of the amount of training data.
% \s{reduce this by 50 percent, move details to appendix}

We show two illustrative  experiments in figure~\ref{fig:chowliu}; please refer to Appendix~\ref{exp:chowliu} for all simulations. We can see that \emph{although the two $\F$-families used are misspecified with respect to the true underlying (conditional) distributions}, the estimated Chow-Liu trees are much more accurate across all data regimes, with \textbf{CPC} (blue) being the best alternative. 
%This can be explained with the known difficulty of estimating Shannon mutual information in high dimensions~\citep{Poole2018OnVL}, and also the symmetry of mutual information. 
Surprisingly, $\F$-information(Gaussian) works consistently well in all cases and only requires about 100 samples to recover the ground-truth Chow-Liu tree in simulation-A. 
%The reason is that the \textbf{relative order of $\F$-information} between different pairs of random variables can be better estimated with fewer samples.
\begin{figure*}[h]
\centering
\subfloat[Chow-Liu tree Construction]
{\label{fig:chowliu}\includegraphics[width=1\textwidth]{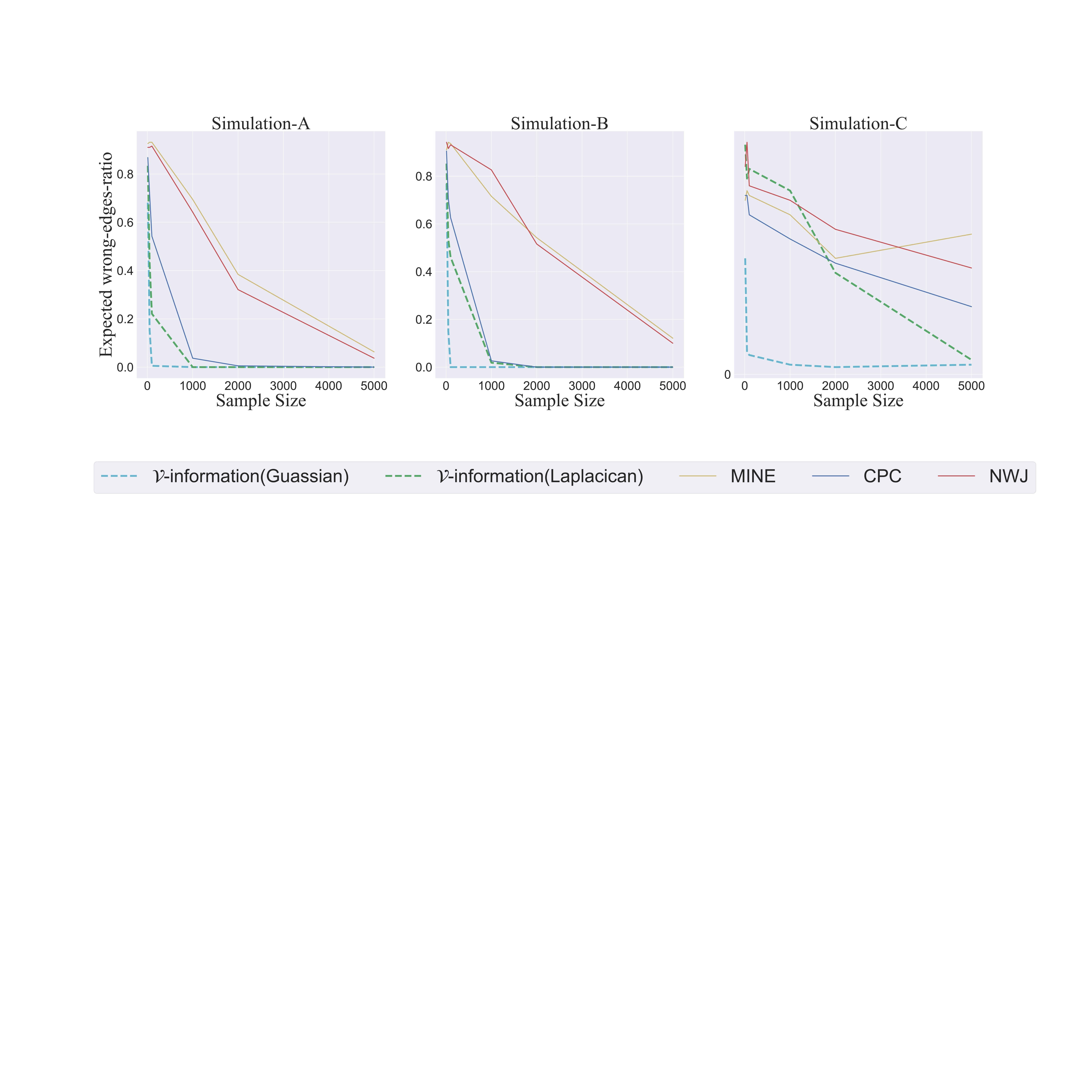}} \\
\subfloat[Gene network inference] {\label{fig:gene}\includegraphics[width=0.4\textwidth]{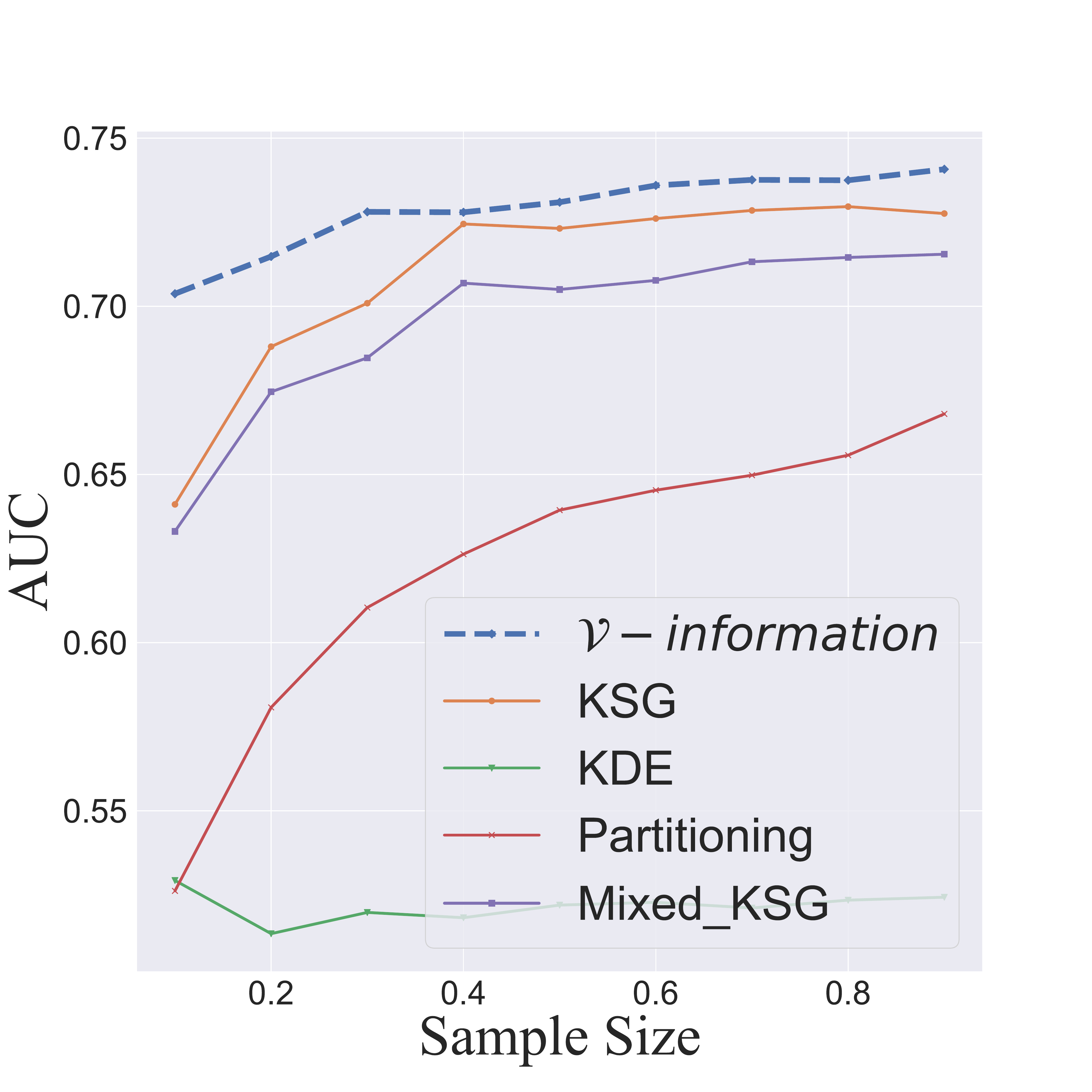}}
\subfloat[$\F$-information of frames]{\label{fig:frame}\includegraphics[width=0.4\textwidth]{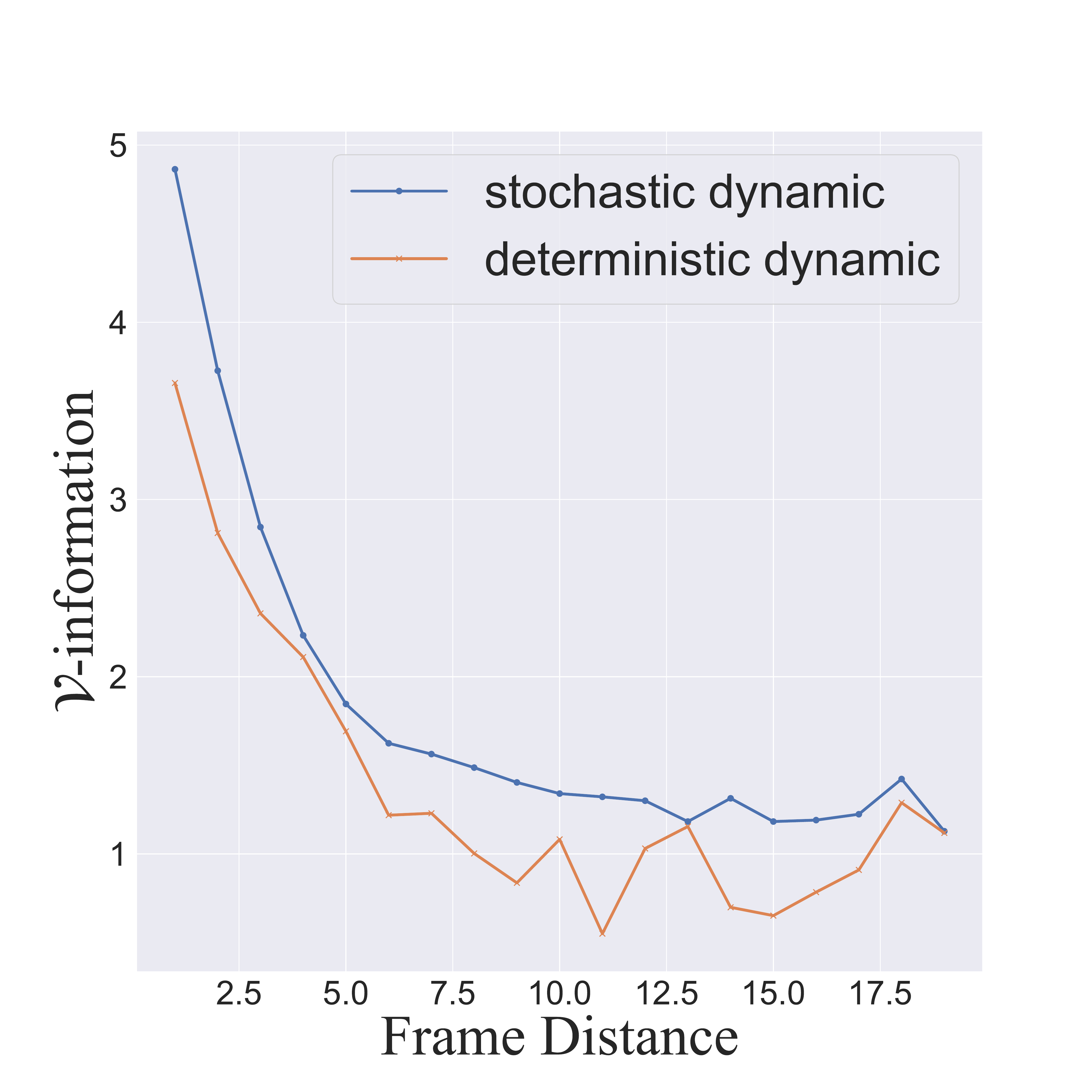}}
\caption{(a) The expected wrong-edges-ratio of algorithm \ref{alg} with different $\F$ and other mutual information estimators-based algorithms from sample size $10$ to $5\times 10^3$. (b) AUC curve for gene regulatory network inference. (c) The predictive $\F$-information versus frame distance.} \label{fig:application}
\end{figure*}

% \begin{subfigure}{0.8\textwidth}
% \includegraphics[width=\textwidth]{img/chowliu4_new.png}
% \caption{Chow-Liu tree Construction}
% \label{fig:chowliu}
% \end{subfigure}
% ~
% \begin{subfigure}{0.42\textwidth}
% \includegraphics[width=\textwidth]{img/gene5_new.png}
% \caption{Gene network inference}
% \label{fig:gene}
% \end{subfigure}
% ~
% \begin{subfigure}{0.42\textwidth}
% \includegraphics[width=\textwidth]{img/frame5_new.png}
% \caption{$\F$-information of frames}
% \label{fig:frame}
% \end{subfigure}

\subsection{Gene regulatory network inference}
\label{exp:gene}
Mutual information between pairs of gene expressions is often used to construct gene regulatory networks. 
We evaluate  
$\F$-information 
%Algorithm~\ref{alg} 
on the in-silico dataset from the DREAM5 challenge~\citep{Marbach2012WisdomOC} and use the setup of \citet{Gao2017EstimatingMI}, where 20 genes with 660 datapoints are utilized to evaluate all methods.
%The goal of gene regulatory network inference is to reverse engineer gene regulatory networks from gene expressions. The magnitude of mutual information between the expressions of pairs of genes can be utilized to reconstruct the networks.  
We compare with state-of-the-art non-parametric Shannon mutual information estimators in this low dimensional setting: \textbf{KDE}, the traditional kernel density estimator; the \textbf{KSG} estimator~\citep{PhysRevE.69.066138}; the \textbf{Mixed KSG} estimator~\citep{Gao2017EstimatingMI} and \textbf{Partitioning}, an adaptive partitioning estimator~\citep{Darbellay1999EstimationOT} implemented by \citet{Szab2014InformationTE}. For fair comparison with these low dimensional estimators, we select $\F=\{f: f[x]=\mathcal{N}(g(x) , \frac{1}{2}),x \in \X; f[\n]=\mathcal{N}(\mu , \frac{1}{2})|\mu \in \mathrm{range}(g)\}$, where $g$ is a $3$-rd order polynomial. 

 The task is to predict whether a directed edge between genes exists in the ground-truth gene network. We use the estimated mutual information and $\F$-information for gene pairs as the test statistic to obtain the AUC for various methods. As shown in Figure~\ref{fig:gene}, our method outperforms all other methods in network inference under different fractions of data used for estimation.
%  \s{summarize this order stuff in 1 sentence}
The natural information measure in this task is asymmetry since the goal is to find the pairs of genes $(A_i,B_i)$s in which $A_i$ regulates $B_i$, thus $\F$-information is more suitable for such case than mutual information.

%\subsection{Recovery of time-series direction from frames in videos}
\subsection{Recovering the order of video frames}
Let $X_1, \cdots, X_{20}$ be random variables each representing a frame in videos from the Moving-MNIST dataset, 
%\footnote{\hyperlink{http://www.cs.toronto.edu/~nitish/unsupervised\_video/}{http://www.cs.toronto.edu/~nitish/unsupervised\_video/}} with stochastic dynamics
which contains 10,000 sequences each of length 20 showing two digits moving with stochastic dynamics. Can Algorithm~\ref{alg} be used to recover the natural (causal) order of the frames?
%in a 64 x 64 frame.
%We conduct structure learning on the video frames to show that predictive $\F$-information can recover the frame distance in a video, by measuring the predictable of frame $X$ for frame $Y$ ($I_{\F}(X \rightarrow Y)$). 
Intuitively, predictability should be inversely related with frame distance, thus enabling structure learning. 
%between two different frames since it's harder to predict the distant future. 
Using a conditional PixelCNN++ ~\citep{Salimans2017PixelCNNIT} as predictive family $\F$, we shown in Figure~\ref{fig:frame} that predictive $\F$-information does indeed decrease with frame distance, despite some fluctuations when the frame distances are large. 
%Based on this observation, we utilize the predictive $\F$-information to do inference on videos. 
Using Algorithm \ref{alg} to construct a Chow-Liu tree, we find that \emph{the tree perfectly recovers the relative order of the frames}. 

We also generate a Deterministic-Moving-MNIST dataset, where digits move according to \emph{deterministic} dynamics. From the perspective of Shannon mutual information, every pair of frames has the same mutual information. Hence, standard Chow-Liu tree learning algorithm would fail to discover the natural ordering of the frames (causal structure). In contrast, once we constrain the observer to PixelCNN++ models, algorithm \ref{alg} with predictive $\F$-information can still recover the order of different frames when the frame distances are relatively small (less than 9). Compared to the stochastic dynamics case, $\F$-information is more irregular with increasing frame distance, since the PixelCNN++ tends to overfit.
%finite dataset with deterministic dynamics. 

% \begin{figure}[htbp]

%   \centering
%   \subfloat{\includegraphics[width=0.5\textwidth]{img/gene_F.png}\label{fig:gene}}
%   \hfill
%   \subfloat{\includegraphics[width=0.49\textwidth]{img/frame_F.png}\label{fig:mm}}
%   \caption{Left: AUC curve for gene regulatory network inference. Right: The predictive $\F$-information versus frame distance.}

% \end{figure}

\subsection{Information theoretic approaches to fairness}
%\s{this subsection needs a major rewrite, it's very hard to understand}

%Learning fair representations has recently attracted much attention. 
The goal of fair representation learning is to 
map inputs $X \in \X$ to a feature space $Z \in \Z$ such that the mutual information between $Z$ and some sensitive attribute $U \in \mathcal{U}$ (such as race or gender) is minimized. The motivation is that using $Z$ (instead of $X$) as input we can no longer use the sensitive attributes $U$ to make decisions, thus ensuring some notion of fairness. Existing methods obtain fair representations by optimizing against an ``adversarial'' discriminator so that the discriminator cannot predict $U$ from $Z$~\citep{ Edwards2015CensoringRW, Louizos2015TheVF,Madras2018LearningAF,Song2018LearningCF}. Under some assumptions on $U$ and $\F$, we show in Appendix~\ref{connection:fairness} that these works actually use $\F$-information minimization as part of their objective, where $\F$ depends on the functional form of the discriminator.
%learn a map $f$ from the original input $X \in \X$ to some feature space $Z \in \Z$, while requiring small mutual information between $Z$ and some sensitive attribute $U \in \mathcal{U}$ such as race or gender~\citep{Song2018LearningCF}. 
%More formally we would like to find a function $f$ that maximizes $I(Z; X)$ subject to the constraint that $I(Z; U) \leq \epsilon$ where $\epsilon$ is a hyper-parameter that depends on the required level of fairness~\citep{Song2018LearningCF}. However, mutual information $I(Z, U)$ minimization is unrealistic due to the high sample complexity, as mentioned above, and the guarantee is hard to achieve. 
%However, all of these works do not consider the generalization of $\F$-information. More specifically,

However, it is clear from the $\F$-information perspective that
features trained with $\F_A$-information minimization might not generalize to $\F_B$-information and vice versa. To illustrate this, we use a function family $\F_j$  as the attacker to extract information from features trained with $I_{\F_{i}}(Z \to U)$ minimization, where all the $\F$s are neural nets.
On three datasets commonly used in the fairness literature (Adult, German, Heritage),
%and the $(\F_i,\F_j)$ element in figure~\ref{fig:tsne}b is the best attack AUC on three datasets. 
previous methods work well at preventing information ``leak'' against the class of adversary they've been trained on, but fail when we consider different ones.
As shown in Figure~\ref{fig:tsne}b in Appendix, the diagonal elements in the matrix are usually the smallest in rows, indicating that the attacker function family $\F_i$ extracts more information on featured trained with $\F_{j(j\not=i)}$-information minimization. This challenges the generalizability of fair representations in previous works.
%, especially when the standpoint is the Shannon mutual information~\citep{Song2018LearningCF}. 
Please refer to Appendix \ref{exp:fair} for details.

% We further visualize a special case of the $(\F_i,\F_j)$ pair in figure~\ref{fig:tsne}a, where the $\F_i=\{f:\mathcal{Z} \rightarrow \mathcal{P}(\mathcal{U})|f[z](u) = \sum\limits_{(z_i,u_i) \in \D}\frac{e^{\lVert z_i-z\rVert_2^2/h}}{\sum_{(z_i,u_i) \in \D}e^{\lVert z_i-z\rVert_2^2}/h}*\mathbb{I}(u_i=u),h \in \mathbb{R}\}$ explicitly makes the features of different sensitivity attributes more evenly spread, and functions in $\F_j$ is a simple two layers MLP with softmax at the top. The leaned features by $\F_i$-information minimization appear more evenly spread as expected, however, the attacker functions in $\F_j$ can still achieve a high AUC of $0.857$.

\section{Related work}

\paragraph{Alternative definitions of Information} Several alternative definitions of mutual information are available in the literature. Renyi entropy and Renyi mutual information~\citep{lenzi2000statistical} extend Shannon information by replacing KL divergence with $f$-divergences. However, they have the same difficulty when applied to high dimensional problems as Shannon information. 

% \s{expand this paragraph, this is pretty crucial} 
The line of work most related to ours is the $H$ entropy and $H$ mutual information~\citep{degroot1962uncertainty,grunwald2004game}, which associate a definition of entropy to every prediction loss. However, there are two key differences. First, literatures in $H$ entropy only consider a few special types of prediction functions that serve unique theoretical purposes; for example, \citep{duchi2018multiclass} considers the set of all functions on a feature space to prove surrogate risk consistency, and \citep{grunwald2004game} only considers the $H$ entropy to prove the duality between maximum entropy and worst-case loss minimization.
%These definitions serve special theoretical purposes such as proving  or proving the equivalence between maximum entropy and loss minimization~\citep{grunwald2004game}. 
In contrast, our definition takes a completely different perspective --- emphasizing bounded computation and intuitive properties of ``usable'' information. Furthermore $H$ entropy still suffers from difficulty of estimation in high dimension because the definitions do not restrict to functions with small complexity (e.g. Rademacher complexity).
%so suffers from the same problem as Shannon entropy for high dimensional problems. 
%is defined as the expected loss given best probability forecast. This notion is also less applicable to high dimensional problems. \citet{duchi2018multiclass} considers $H$ entropy under certain restricted prediction functions, but the setup is specialized to proving surrogate risk consistency, with no consideration of general prediction function families or related properties.
\vspace{-2mm}

\paragraph{Mutual information estimation}
The estimation of mutual information in the machine learning field is often on the continuous underlying distribution. For non-parametric mutual information estimators, many methods have exploited the $3H$ principle to calculate the mutual information, such as the Kernel density estimator~\citep{Paninski2008UndersmoothedKE}, k-Nearest-Neighbor estimator and the KSG estimator~\citep{PhysRevE.69.066138}. However, these non-parametric estimators usually aren't scalable to high dimension. Recently, several works utilize the variational lower bounds of MI to design MI estimator based on deep neural network in order to estimate MI of high dimension continuous random variables~\citep{Nguyen2010EstimatingDF,Oord2018RepresentationLW,Belghazi2018MutualIN}. 
%Although \citet{Hjelm2018LearningDR} successfully applied MINE for representation learning, \citet{Tschannen2019OnMI} shows the success might be loosely attributed to the mutual information itself. 
% In Appendix~\ref{anal:approx} we discuss the inefficiencies of theses estimators, which coincide with the experimental results in~\citep{poole2019variational}.
% These works assume neural networks can express the density ratio between distributions, but the conditions of equality of these estimators and mutual information are unlikely to hold in practice.

\section{Conclusion}
We defined and investigated $\F$-information, a variational extension to classic mutual information that incorporates computational constraints. Unlike Shannon mutual information, $\F$-information attempts to capture usable information, and has very different properties, such as invalidating the data processing inequality. In addition, $\F$-information can be provably estimated, and can thus be more effective for structure learning and fair representation learning.

\subsection*{Acknowledgements}
This research was supported by AFOSR (FA9550-19-1-0024), NSF (\#1651565, \#1522054, \#1733686), ONR, and FLI.
% \yx{There will be a strict upper limit of 10 pages for the main text. Reviewers will be instructed to apply a higher standard to papers in excess of 8 pages.}\js{don't worry too much about this, i think you can squeeze quite a bit of space just by making the aspect ratio of the figures smaller.}
% \yx{TODO: experiment details; relation with previous fairness works;reorganize proofs;formal version of def 1}

% \clearpage
\bibliography{iclr2020_conference}
\bibliographystyle{iclr2020_conference}

\newpage
\appendix

\section{Proofs}

\subsection{Proof of Proposition 1}

\propvar*

\begin{proof}

(1)
% \begin{align}
% \label{eq:omega}
% H_{\Omega}(Y | X) &= \inf_{f \in \Omega} \exy \left[\log \frac{1}{f[x](y)}\right] =\inf_{f \in \Omega} \mathbb{E}_{x \sim X} \left[\int P_{Y|X}(y|x)\log \frac{1}{f[x](y)}\mathrm{d} \vy\right] \nonumber\\
% &= \inf_{f \in \Omega} \mathbb{E}_{x \sim X}\left[KL(P_{Y|X}||f[x]) + H(P_{Y|X})\right] \nonumber\\
% &= \mathbb{E}_{x \sim X}\left[ H(P_{Y|X})\right] \nonumber\\
% & = H(Y | X)
% \end{align}

% Let 
Let $P_{Y \mid x}$ denote the density function of random variable $Y$ conditioned on $X = x$ (we denote this random variable as $Y \mid x$). 
\begin{align*}
\label{eq:omega}
H_{\Omega}(Y | X) &= \inf_{f \in \Omega} \exy \left[\log \frac{1}{f[x](y)}\right] =\inf_{f \in \Omega} \mathbb{E}_{x \sim X} \mathbb{E}_{y \sim Y \mid x}\left[\log \frac{P_{Y \mid x}(y)}{f[x](y) P_{Y \mid x}(y)} \right] \\
&= \inf_{f \in \Omega} \mathbb{E}_{x \sim X}\left[\mathrm{KL}(P_{Y|x} \Vert f[x]) + H(Y|x)\right]\\
&= \mathbb{E}_{x \sim X}\left[ H(Y|x)\right] = H(Y | X) \numberthis
\end{align*}
where infimum is achieved for $f$ where $f[x] = P_{Y|x}$ and $H$ is the Shannon (conditional) entropy. The same proof technique can be used to show that $H_{\Omega}(Y) = H(Y)$, with the infimum achieved by $f$ where $f[\n]=P_{Y}$. Hence we have

\begin{align}
    I_{\Omega}(Y \to X) = H_{\Omega}(Y) - H_{\Omega}(Y|X) = H(Y) - H(Y|X) = I(Y;X)
\end{align}

(2)
% Assume $Y \in \mathbb{R}^d$ and $\F = \{f: \{\n\} \rightarrow \frac{1}{Z}e^{-\parallel y-\mu \parallel_2}\mid \mu \in \mathbb{R}^d\}$, where $Z$ is a normalizing constant $\int e^{-\parallel y-\mu \parallel_2} dy$. In fact $Z=d\Gamma(d)B_d$, where $B_d$ is the volume of $d$-dimensional unit ball and $\Gamma$ is gamma function. Then we have
\begin{align}
\label{eq:laplace}
% H_{Laplace(\mu, b)}(Y) = 2 \text{MAD}(Y)
H_\F(Y) &= \inf_{f \in \F} \mathbb{E}_{y \sim Y}\left[-\log f[\emptyset](y) \right] = \inf_{\mu \in \mathbb{R}^d}\mathbb{E}_{y \sim Y}\left[-\log \frac{1}{Z}e^{-\parallel y-\mu \parallel_2}\right] \nonumber \\
&= \inf_{\mu \in \mathbb{R}^d}\mathbb{E}_{y \sim Y}\left[\parallel y-\mu \parallel_2\right] + \log{Z} \nonumber \\
&= \mathrm{MAD}(Y) + \log{Z}
\end{align}
where $\mathrm{MAD}$ denotes mean absolute deviation $\mathbb{E}_{y \sim Y}\left[\parallel y - \Eb[Y] \parallel_2\right]$.
%\yx{it seems Z would not depend on $\mu$ due to the infinite integral area}

(3) 
% Assume $\F = \{f: \{\n\} \rightarrow \mathcal{N}(\mu, \Sigma)\mid \mu \in \mathbb{R}^d, \Sigma = 1/2I_{d \times d}\}$, then we have
\begin{align*}
\label{eq:normal}
H_{\F}(Y)& = \inf_{f \in \F} \mathbb{E}_{y \sim Y}\left[-\log f[\emptyset](y) \right] \nonumber \\
&= \inf_{\mu \in \mathbb{R}^d}\mathbb{E}_{y \sim Y}\left[-\log \frac{1}{(2 \pi)^{\frac{d}{2}}|\Sigma|^{\frac12}}e^{-\frac12(y - \mu)^T \Sigma^{-1}(y - \mu)}\right] \nonumber \\
& = \inf_{\mu \in \mathbb{R}^d} \mathbb{E}_{y \sim Y} [(y - \mu)^T(y - \mu)] + \frac{d}{2}\log{\pi}\nonumber \\
& = \inf_{\mu \in \mathbb{R}^d} \mathbb{E}_{y \sim Y} [\mathrm{tr}\left((y - \mu) (y - \mu)^T\right)] + \frac{d}{2}\log{\pi} & \text{(Cyclic property of trace)} \\
& = \mathrm{tr}\left(\mathrm{Cov}(Y)\right) + \frac{d}{2}\log{\pi} & \text{(Linearity of trace)}
\end{align*}

(4) 
The density function of an exponential family distribution with sufficient statistics $\tb$ is $y \mapsto \exp\left( \theta \cdot \tb(y) - A(\theta) \right)$ where $A(\theta)$ is the partition function.
% Assume $Y \in \mathbb{R}^d$ and $\F = \{ f: X \rightarrow\exp \left(\theta \cdot {\bf t}(y) - A(\theta) \right), \theta \in \Theta \}$, where $\Theta = \{\theta \in \mathbb{R}^d|A(\theta) < +\infty\}$, then we have
\begin{align}
H_{\F}(Y) =& \inf_{f \in \F} \mathbb{E}_{y \sim Y}\left[-\log f[\emptyset](y) \right] = \inf_{\theta \in \Theta}  \mathbb{E}_{y \sim Y}\left[- \log\exp\left(\theta \cdot \mathbb{E}_{y \sim Y}[{\bf t}(y)] - A(\theta) \right)\right] \nonumber\\
=&   -\sup_{\theta \in \Theta}  \left(\theta \cdot \mathbb{E}_{y \sim Y}[{\bf t}(y)] - A(\theta) \right)] \nonumber\\
=&  -A^*( \mathbb{E}_{y \sim Y}[{\bf t}(y)])
\end{align}
where $A^*$ is the Fenchel dual of the log-partition function $A(\theta)$. Under mild conditions~\citep{wainwright2008graphical}
\[
-A^*(\mu) = H(P_\mu)
\]
where $P_\mu$ is the maximum entropy distribution out of all distributions satisfying 
% \s{provide a reference for this? nonstandard}
%where $\theta(\mu)$ is the unique parameter satisfying 
$\mathbb{E}_{y \sim P_\mu}[{\bf t}(y)] = \mu$~\citep{jaynes1982rationale}, and $H(\cdot)$ is the Shannon entropy.
% Therefore
% \begin{align}
% H_{\F}(Y)
% =&  -A^*( \mathbb{E}_{y \sim Y}[{\bf t}(y)]) \nonumber\\
% =&  H(p_{\theta(\mathbb{E}_{y \sim Y}[{\bf t}(y)])})
% \end{align}
% where $H(p_{\theta(\mathbb{E}_{y \sim Y}[{\bf t}(y)])})$ is the entropy of the maximum entropy distribution with expected sufficient statistics $\mathbb{E}_{y \sim Y}[{\bf t}(y)]$. 
%\s{muY is undefined, also misleading because it's not the mean}

% (4)\begin{align}
% H_{CNN}(Y | X)& = \inf_{f \in CNN}\mathbb{E}_{x \sim X}\E_{y \sim Y|X}\log \frac{1}{Pr[f(x) = y]}] \nonumber \\
% & = \inf_{f \in CNN} \mathbb{E}_{x \sim X} [H(Y \mid X = x, f(x))] \nonumber \\
% & = \inf_{f \in CNN} \mathbb{E}_{x \sim X}[\mathcal{L}(\text{label(x)}, f(x))]
% \end{align}
% where we note that the final form expressed above is a common loss function used in neural networks. In the classification case the cross entropy loss $\mathcal{L}$ is simply the negative log probability assigned to the true class, and in the regression case the cross entropy loss $\mathcal{L}$ assumes that the network outputs the mean parameter of a Laplacian (for $l_1$ loss) or Gaussian (for $l_2$ loss) distribution.
(5)
Assume random variable $Y \in  \mathbb{R}^{d}$, $\F = \{f:  x \mapsto \mathcal{N}(\phi(x), \Sigma),x \in \X; \n \mapsto \mathcal{N}(\mu, \Sigma)|\mu \in \mathbb{R}^d;\Sigma = \frac{1}{2}I_{d \times d}; \phi \in \Phi \}$. Then the $\F$-information from $X$ to $Y$ is
\begin{align}
I&_{\F}(X \rightarrow Y) = H_{\F}(Y) - H_{\F}(Y | X) \nonumber \\
& = \inf\limits_{\mu \in \mathbb{R}^{d}}\mathbb{E}_{y \sim Y}\left[-\log \frac{1}{(2\pi)^\frac{d}{2}|\Sigma|^{\frac12}} e^{-\parallel y - \mu\parallel_2^2}\right] - \inf\limits_{\phi \in \Phi}\mathbb{E}_{x ,y \sim X,Y}\left[-\log \frac{1}{(2\pi)^\frac{d_y}{2}|\Sigma|^{\frac12}}e^{-\parallel y-\phi(x)\parallel_2^2}\right] \nonumber \\
& =   \inf\limits_{\mu \in \mathbb{R}^d} \mathbb{E}_{x ,y \sim X,Y}\left[\parallel y - \mu\parallel_2^2\right] - \inf\limits_{\phi \in \Phi}\mathbb{E}_{x ,y \sim X,Y}\left[\parallel y-\phi(x)\parallel_2^2\right] \nonumber \\
& =  \mathrm{tr}\left(\mathrm{Cov}(Y)\right)\left( 1 - \frac{\inf\limits_{\phi \in \Phi}\mathbb{E}_{x ,y \sim X,Y}\left[\parallel y-\phi(x)\parallel_2^2\right]}{\mathrm{tr}\left(\mathrm{Cov}(Y)\right)} \right) \nonumber \\
& = \mathrm{tr}\left(\mathrm{Cov}(Y)\right) R^2 
\end{align}
%The proof is complete with the five above points. 
% (5) \sj{Alternative proof}
% \begin{align*}
%     I&_{\F}(X \rightarrow Y) = H_{\F}(Y) - H_{\F}(Y | X) \\
%     &= \inf_{\mu \in \mathbb{R}^d, \Sigma \in \mathbb{R}} \Eb[\log ]
% \end{align*}
\end{proof}

\subsection{Proof of Proposition~\ref{prop:property}}
\property*

\begin{proof}

(1) 
\begin{gather}
    H_{\F}(Y) = \inf_{f \in \F} \mathbb{E}_{y \sim Y}\left[\log \frac{1}{f[\n](y)}\right] \geq \inf_{f \in \U} \mathbb{E}_{y \sim Y}\left[\log \frac{1}{f[\n](y)}\right] = H_{\U}(Y)\label{ineq:GF1}\\
    H_{\F}(Y|X) = \inf_{f \in \F} \mathbb{E}_{x,y \sim X,Y}\left[\log \frac{1}{f[x](y)}\right] \geq \inf_{f \in \U} \mathbb{E}_{x,y \sim X,Y}\left[\log \frac{1}{f[x](y)}\right] = H_{\U}(Y|X)\label{ineq:GF2}
\end{gather}
The inequalities~(\ref{ineq:GF1}) and (\ref{ineq:GF2}) are because we are taking the infimum over a larger set.

(2) 

Denote $\F_\emptyset \subset \F$ as the subset of $f$ that satisfy $f[x] = f[\emptyset]$, $\forall x \in \X$.

\begin{align*}
    H_\F(Y) &= \inf_{f \in \F} \E_{x, y \sim X,Y}\left[ -\log f[\emptyset](y) \right] \\
    &= \inf_{f \in \F_\emptyset} \E_{x, y \sim X,Y}\left[ -\log f[\emptyset](y) \right] & \text{(By Optional Ignorance)} \\
    &= \inf_{f \in \F_\emptyset} \E_{x, y \sim X,Y}\left[ -\log f[x](y) \right] \\
    &\geq \inf_{f \in \F} \E_{x, y \sim X,Y}\left[ -\log f[x](y) \right] = H_\F(Y \mid X)
\end{align*}

Therefore
\[I_\F(Y\rightarrow X) = H_\F(Y) - H_\F(Y|X) \ge 0\]

(3)

Denote $\F_\emptyset \subset \F$ as the subset of $f$ that satisfy $f[x] = f[\emptyset]$, $\forall x \in \X$.

\begin{align*}
H_\F(Y \mid X) &= \inf_{f \in \F} \E_{x, y \sim X,Y}[-\log f[x](y)] \\
&= \inf_{f \in \F} \E_{x \sim X}\E_{y \sim Y}[-\log f[x](y)] & \text{(Independence)} \\
&\geq \E_{x \sim X}\left[ \inf_{f \in \F}\E_{y \sim Y}[-\log f[x](y)] \right] & \text{(Jensen)} \\
&= \E_{x \sim X}\left[ \inf_{f \in \F_\emptyset}\E_{y \sim Y}[-\log f[x](y)] \right] & \text{(Optional Ignorance)} \\
&=  \inf_{f \in \F_\emptyset}\E_{y \sim Y}[-\log f[\emptyset](y)] & \text{(No dependence on $x$)} \\
&\geq \inf_{f \in \F}\E_{y \sim Y}[-\log f[\emptyset](y)] = H_\F(Y)
\end{align*}
Therefore $I_\F(Y\rightarrow X) = H_\F(Y) - H_\F(Y|X) \le 0$.
Combined with the Proposition~\ref{prop:property}.2 that $I_\F(X \to Y)$ must be non-negative, $I_\F(X \to Y)$ must be $0$. 

\end{proof}

\subsection{Proof of Theorem \ref{thm:pac}}
\thmpac*
Before proving theorem 1, we introduce two lemmas. Proofs for these Lemmas follow the same strategy as theorem 8 in \citet{Bartlett2001RademacherAG}:
\begin{lemma}
\label{lemma:condent}
Let $X,Y$ be two random variables taking values in $\X,\Y$ and $\mathcal{D}$ denotes the set of samples drawn from the joint distribution over $\X \times \Y$. Assume $\forall f \in \F,x \in \X,y \in \Y, \log{f[x](y)} \in [-B,B]$. Take $\hat{f} =\argmin\limits_{f \in \F} \frac{1}{|\mathcal{D}|} \sum\limits_{x_i, y_i \in \mathcal{D}}-\log f[x_i](y_i)$, then $\forall \delta \in (0,1)$, with probability at least $1-\delta$, we have:
\begin{align}
\label{b:pac}
\left|H_\F(Y|X)-\frac{1}{|\mathcal{D}|} \sum_{x_i, y_i \in \mathcal{D}} -\log \hat{f}[x_i](y_i)\right| \le 2{\mathfrak{R}}_{|\mathcal{D}|}( \mathcal{G}_{\F}) + 2B\sqrt{\frac{2\log{\frac{1}{\delta}}}{|\mathcal{D}|}}
\end{align}
\end{lemma}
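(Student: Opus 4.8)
The plan is to read Lemma~\ref{lemma:condent} as a standard uniform-convergence guarantee for the bounded loss class $\mathcal{G}_{\F}$ and to split the argument into a deterministic optimization step followed by a probabilistic concentration step. First I would fix notation: let $L(f) = \exy[-\log f[x](y)]$ be the population risk and $\hat L(f) = \frac{1}{|\D|}\sum_{x_i,y_i\in\D} -\log f[x_i](y_i)$ its empirical counterpart, so that $H_{\F}(Y\mid X) = \inf_{f\in\F} L(f)$ while the quantity inside the absolute value is $\hat L(\hat f)$ with $\hat f = \argmin_{f\in\F}\hat L(f)$. Since $-\log f[x](y) = -g(x,y)$ for the corresponding $g\in\mathcal{G}_{\F}$, deviations between $L$ and $\hat L$ are exactly empirical-process deviations of $\mathcal{G}_{\F}$, and the hypothesis $\log f[x](y)\in[-B,B]$ makes every member of $\mathcal{G}_{\F}$ bounded with range $2B$.

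The deterministic core is the sandwich $|\hat L(\hat f) - \inf_{f}L(f)| \le \sup_{f\in\F}|L(f)-\hat L(f)|$. To establish it I would pick, for arbitrary $\eta>0$, an $\eta$-approximate population minimizer $f_\eta$ (this is needed because the infimum defining $H_{\F}$ need not be attained) and chain two one-line estimates: on one side $\hat L(\hat f)\le \hat L(f_\eta)\le L(f_\eta)+\sup_f|L-\hat L|\le \inf_f L+\eta+\sup_f|L-\hat L|$, and on the other side $\inf_f L\le L(\hat f)\le \hat L(\hat f)+\sup_f|L-\hat L|$; letting $\eta\to 0$ gives the claim. It then remains to control $\sup_{f}|L(f)-\hat L(f)|$ with high probability, which is the two-step argument of \citet{Bartlett2001RademacherAG} referenced in the text. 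I would (i) apply McDiarmid's bounded-differences inequality to $\Phi=\sup_f|L(f)-\hat L(f)|$, noting that replacing one of the $|\D|$ samples moves $\hat L$, hence $\Phi$, by at most $2B/|\D|$, which yields $\Phi\le\E[\Phi]+B\sqrt{2\log(1/\delta)/|\D|}$ with probability at least $1-\delta$; and (ii) bound $\E[\Phi]$ by symmetrization with a ghost sample and Rademacher signs, giving $\E[\Phi]\le 2\,\mathfrak{R}_{|\D|}(\mathcal{G}_{\F})$. Combining these with the sandwich produces the stated bound, where the factor $2$ on the concentration term is comfortable slack.

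The main obstacle is precisely that $\hat f$ is data-dependent: we are comparing the empirical risk at the empirical minimizer against the \emph{population} infimum, so a pointwise Hoeffding bound at a fixed $f$ does not suffice and we must control the supremum of the empirical process uniformly over all of $\F$ --- this is exactly where the complexity of the family, measured by $\mathfrak{R}_{|\D|}(\mathcal{G}_{\F})$, must enter. Secondary technical points I would verify are the measurability of the supremum (guaranteed by the separability assumption on $\gX,\gY$), the correct boundedness constant entering McDiarmid, and that symmetrization is applied to the composed loss class $\mathcal{G}_{\F}$ rather than to $\F$ itself. The same template will later be reused verbatim for the marginal entropy $H_{\F}(Y)$, and a union bound over the two instances will give the two-sided statement of Theorem~\ref{thm:pac}.
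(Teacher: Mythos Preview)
Your proposal is correct and follows essentially the same route as the paper: McDiarmid's inequality applied to $\Phi(\D)=\sup_{f\in\F}|L(f)-\hat L(f)|$ with bounded differences $2B/|\D|$, then symmetrization with a ghost sample to bound $\E[\Phi]\le 2\,\mathfrak{R}_{|\D|}(\mathcal{G}_{\F})$, followed by the sandwich comparing $\hat L(\hat f)$ with $\inf_f L(f)$. Your treatment is in fact slightly more careful than the paper's, which simply posits a population minimizer $\tilde f$ without the $\eta$-approximation step, and your observation that the factor $2$ on the $B\sqrt{\cdot}$ term is slack is also correct (the paper's own derivation in fact yields coefficient $1$ there before stating the lemma with $2$).
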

\begin{proof}
We apply McDiarmid's inequality to the function $\Phi$ defined for any sample $\D$ by
\begin{align}
\Phi(\mathcal{D}) = \sup\limits_{f \in \F}\left|\mathbb{E}_{x,y}\left[-\log f[x](y)\right] -   \frac{1}{|\mathcal{D}|} \sum_{x_i, y_i \in \mathcal{D}} -\log f[x_i](y_i)\right|
\end{align}
Let $\D$ and $\D'$ be two samples differing by exactly one point, then since the difference of suprema does not exceed the supremum of the difference and $\forall f \in \F,x \in \X,y \in \Y, \log{f[x](y)} \in [-B,B]$, we have:
\begin{align*}
    &\Phi(\D) - \Phi(\D')\\
    &\le \sup\limits_{f \in \F}\left[\left|\frac{1}{|\mathcal{D}|} \sum_{x_i, y_i \in \mathcal{D}} \log f[x_i](y_i)-\mathbb{E}_{x,y}\left[\log f[x](y)\right] \right| - \left| \frac{1}{|\mathcal{D'}|} \sum_{x_i, y_i \in \mathcal{D'}} \log f[x_i](y_i)-\mathbb{E}_{x,y}\left[\log f[x](y)\right] \right|\right]\\
    &\le \sup\limits_{f \in \F}\left|\frac{1}{|\mathcal{D}|} \sum_{x_i, y_i \in \mathcal{D}} -\log f[x_i](y_i)|- \frac{1}{|\mathcal{D'}|} \sum_{x_i, y_i \in \mathcal{D'}} -\log f[x_i](y_i)\right|\\
    &\le \frac{2B}{|\mathcal{D}|}
\end{align*}
% \js{delta is not defined here?}
%\sj{Missing mathcal font for many $\mathcal{D}$ here} 

then by McDiarmid's inequality, for any $\delta \in (0,1)$, with probability at least $1-\delta$, the following holds:
\begin{align}
    \Phi(\D) \le \E_{\D}[\Phi(\D)] + B\sqrt{\frac{2\log \frac{1}{\delta}}{
    |\D|}}
    \label{eq:bigbound}
\end{align}
% \js{
% i think this should be:
% \begin{align}
%     |\Phi(\D) - \E_{\D}[\Phi(\D)]| \leq B\sqrt{\frac{2\log \frac{2}{\delta}}{|\D|}}
%     %\label{eq:bigbound}
% \end{align}
% once we fixed the constant issues? this is w.p $1 - \delta$, so i think your current statement is correct
% }

Then we bound the $\E_{\D}[\Phi(\D)]$ term:
\begin{align}
    \E_{\D}[\Phi(\D)] &= \E_{\D}\left[\sup\limits_{f \in \F}\left|\mathbb{E}_{x,y}\left[-\log f[x](y)\right] - \frac{1}{|\mathcal{D}|} \sum_{x_i, y_i \in \mathcal{D}} -\log f[x_i](y_i)\right|\right]\\
    &= \E_{\D}\left[\sup\limits_{f \in \F}\left|\E_{\D'}\left[\frac{1}{|\mathcal{D'}|} \sum_{x_i', y_i' \in \mathcal{D'}} \log f[x_i'](y_i')\right] -   \frac{1}{|\mathcal{D}|} \sum_{x_i, y_i \in \mathcal{D}} \log f[x_i](y_i)\right|\right]\\
    &\le \E_{\D}\left[\sup\limits_{f \in \F}\E_{\D'}\left|\frac{1}{|\mathcal{D'}|} \sum_{x_i', y_i' \in \mathcal{D'}} \log f[x_i'](y_i')| -   \frac{1}{|\mathcal{D}|} \sum_{x_i, y_i \in \mathcal{D}} \log f[x_i](y_i)\right|\right]\label{ineq:ab}\\
    &\le \E_{\D,\D'}\left[\sup\limits_{f \in \F}\left|\frac{1}{|\mathcal{D'}|} \sum_{x_i', y_i' \in \mathcal{D'}} \log f[x_i'](y_i')| -   \frac{1}{|\mathcal{D}|} \sum_{x_i, y_i \in \mathcal{D}} \log f[x_i](y_i)\right|\right]\label{eq:jsr}\\
    &=\E_{\D,\D'}\left[\sup\limits_{f \in \F}\left|\frac{1}{|\mathcal{D}|} \sum\limits_{i=1}^{|\D|} (\log f[x_i'](y_i') -  \log f[x_i](y_i))\right|\right]\\
    &\leq \E_{\D,\D',\sigma}\left[\sup\limits_{f \in \F}\left|\frac{1}{|\mathcal{D}|} \sum\limits_{i=1}^{|\D|} \sigma_i(\log f[x_i'](y_i') -  \log f[x_i](y_i))\right|\right] \label{eq:symmetrization}\\
    &\le\E_{\D,\sigma}\left[\sup\limits_{f \in \F}\left|\frac{1}{|\mathcal{D}|} \sum\limits_{i=1}^{|\D|} \sigma_i \log f[x_i](y_i)\right|\right]+\E_{\D',\sigma}\left[\sup\limits_{f \in \F}\left|\frac{1}{|\mathcal{D}|} \sum\limits_{i=1}^{|\D|} \sigma_i \log f[x_i'](y_i')\right|\right]\\
    &=2\E_{\D,\sigma}\left[\sup\limits_{f \in \F}\left|\frac{1}{|\mathcal{D}|} \sum\limits_{i=1}^{|\D|} \sigma_i \log f[x_i](y_i)\right|\right]\\
    &=2\E_{\D,\sigma}\left[\sup\limits_{g \in \mathcal{G}}\left|\frac{1}{|\mathcal{D}|} \sum\limits_{i=1}^{|\D|} \sigma_i  g(x_i,y_i)\right|\right]= 2\mathfrak{R}_{|\D|}( \mathcal{G}_{\F}) \label{eq:braa}
\end{align}
where $\sigma_i$s are Rademacher variables that is uniform in $\{-1,+1\}$. Inequality~(\ref{eq:jsr}) follows from the convexity of $\sup$, inequality~(\ref{eq:symmetrization}) follows from the symmetrization argument for $\ell_1$ norm for Radermacher random variables~(\citet{ledoux2013probability}, Section 6.1), inequality~(\ref{ineq:ab}) follows from the convexity of $|x-c|$. (\ref{eq:braa}) follows from the definition of $\mathcal{G}$ and Rademacher complexity. 
% Then using the well-known results: for any $\delta>0$, with probability at least $1-\frac{\delta}{2}$, the following holds:
% \begin{align}
%     \mathfrak{R}_{|\D|}(\log\circ \mathcal{G}_{\F}) \le \hat{\mathfrak{R}}_{D}(\log\circ \mathcal{G}_{\F}) + B\sqrt{\frac{\log \frac{1}{\delta}}{2|\D|}}\label{eq:ra}
% \end{align}

Finally, combining inequality (\ref{eq:bigbound}) and (\ref{eq:braa}) yields for all $f \in \F$, with probability at least $1-\delta$
\begin{align}
   \left|\mathbb{E}_{x,y}[-\log f[x](y)] -   \frac{1}{|\mathcal{D}|} \sum_{x_i, y_i \in \mathcal{D}} -\log f[x_i](y_i)\right| \le 2{\mathfrak{R}}_{|\mathcal{D}|}(\mathcal{G}_{\F}) + B\sqrt{\frac{2\log{\frac{1}{\delta}}}{|\mathcal{D}|}} 
   \label{eq:thm1}
\end{align}

In particular, the inequality holds for $\hat{f} =\argmin\limits_{f \in \F} \frac{1}{|\mathcal{D}|} \sum\limits_{x_i, y_i \in \mathcal{D}}-\log f[x_i](y_i)$ and $\tilde{f} =\argmin\limits_{f \in \F} \mathbb{E}_{x,y \sim X,Y}\left[-\log f[x](y)\right]$. Then we have:
\begin{align*}
   \mathbb{E}_{x,y \sim X,Y}\left[-\log \tilde{f}[x](y)\right] - \frac{1}{|\mathcal{D}|} \sum\limits_{x_i, y_i \in \mathcal{D}}-\log \tilde{f}[x_i](y_i)&\le H_\F(Y|X)-\frac{1}{|\mathcal{D}|} \sum_{x_i, y_i \in \mathcal{D}} -\log \hat{f}[x_i](y_i)\\
   \le \mathbb{E}_{x,y \sim X,Y}\left[-\log \hat{f}[x](y)\right] - \frac{1}{|\mathcal{D}|} \sum\limits_{x_i, y_i \in \mathcal{D}}-\log \hat{f}[x_i](y_i)
\end{align*}
Hence the bound (\ref{b:pac}) holds.
\end{proof}

Similar bounds can be derived for $H_{\F}(Y)$ when we choose the domain of $x$ to be $\X = \{\n\}$: %. Defining , then we have the following lemma:
\begin{lemma}
\label{lemma:ent}
Let $Y$ be random variable taking values in $\Y$ and $\mathcal{D}$ denotes the set of samples drawn from the underlying distribution $P(Y)$. Assume $\forall f \in \F, y \in \Y, \log{f[\n](y)} \in [-B,B]$. Take $\hat{f} =\argmin\limits_{f \in \F} \frac{1}{|\mathcal{D}|} \sum\limits_{x_i, y_i \in \mathcal{D}}-\log f[\n](y_i)$, then for any $\delta \in (0,1)$, with probability at least $1-\delta$, we have:
\begin{align}
\left|H_{\F}(Y)-\frac{1}{|\mathcal{D}|} \sum_{y_i \in \mathcal{D}} -\log \hat{f}[\n](y_i)\right| &\le 2{\mathfrak{R}}_{|\mathcal{D}|}( \mathcal{G}_{\F^{\n}}) + B\sqrt{\frac{2\log{\frac{1}{\delta}}}{|\mathcal{D}|}}\label{ineq:entropy1}\\
& \le 2{\mathfrak{R}}_{|\mathcal{D}|} (\mathcal{G}_{\F}) + B\sqrt{\frac{2\log{\frac{1}{\delta}}}{|\mathcal{D}|}}\label{ineq:entropy2}
\end{align}
\end{lemma}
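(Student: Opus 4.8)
The plan is to derive the two inequalities (\ref{ineq:entropy1}) and (\ref{ineq:entropy2}) separately. For (\ref{ineq:entropy1}) I would specialize the argument of Lemma~\ref{lemma:condent} to the degenerate side-information space $\X=\{\n\}$, exactly as suggested by the remark preceding the statement. In this case every $f\in\F$ can only output $f[\n]$, the conditional $\F$-entropy $H_\F(Y\mid X)$ with $X\equiv\n$ collapses to $H_\F(Y)$, and the relevant loss class is $\mathcal{G}_{\F^{\n}}=\{y\mapsto\log f[\n](y):f\in\F\}$. I would then invoke the uniform concentration bound (\ref{eq:thm1}) — which carries the single-$B$ deviation term and holds with probability $1-\delta$ simultaneously for all $f\in\F$ — for both the empirical minimizer $\hat f$ and the population minimizer $\tilde f=\argmin_{f\in\F}\E_{y\sim Y}[-\log f[\n](y)]$, and sandwich $H_\F(Y)-\tfrac{1}{|\D|}\sum_{y_i\in\D}-\log\hat f[\n](y_i)$ between the two deviations, as at the end of the proof of Lemma~\ref{lemma:condent}. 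This yields (\ref{ineq:entropy1}) verbatim.

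The substantive step is (\ref{ineq:entropy2}), namely $\mathfrak{R}_{|\D|}(\mathcal{G}_{\F^{\n}})\le\mathfrak{R}_{|\D|}(\mathcal{G}_\F)$. This is not immediate, because $\mathcal{G}_{\F^{\n}}$ consists of functions of $y$ only, sampled under $P_Y$, whereas $\mathcal{G}_\F$ consists of functions of $(x,y)$ sampled under $P_{X,Y}$. The bridge is the \emph{optional ignorance} condition (\ref{eq:optional-ignorance}): for every $f\in\F$, taking $P=f[\n]\in\range(f)$ produces some $f'\in\F$ with $f'[x]=f[\n]$ for all $x\in\X$, so that $\log f[\n](y_i)=\log f'[x_i](y_i)$ for arbitrary auxiliary points $x_i$. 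Since the integrand defining $\mathfrak{R}_{|\D|}(\mathcal{G}_{\F^{\n}})$ depends only on the $y_i$, I can augment the expectation with auxiliary draws $x_i$ so that the pairs $(x_i,y_i)$ follow $P_{X,Y}$ (which has the correct $y$-marginal) without changing the value; then, for each fixed draw and sign pattern $\sigma$, the map $f\mapsto f'$ gives
\begin{align*}
\sup_{f\in\F}\left|\frac{1}{|\D|}\sum_i\sigma_i\log f[\n](y_i)\right|\le\sup_{f\in\F}\left|\frac{1}{|\D|}\sum_i\sigma_i\log f[x_i](y_i)\right|.
\end{align*}
Taking expectations over the samples and the Rademacher variables yields (\ref{ineq:entropy2}).

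The main obstacle, and the only place where something beyond a routine replay of Lemma~\ref{lemma:condent} is needed, is this last Rademacher comparison: reconciling the two different domains and sampling distributions. Optional ignorance is precisely the structural property that realizes each $\n$-predictor as a genuine member of $\F$ that ignores its input, which gives the inclusion of the constant-in-$x$ extensions of $\mathcal{G}_{\F^{\n}}$ into $\mathcal{G}_\F$ and hence dominates the $y$-only supremum by the full $(x,y)$ supremum. Without it the inclusion would fail and the comparison of complexities could not be asserted.
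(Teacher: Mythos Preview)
Your proposal is correct and essentially identical to the paper's own proof: inequality~(\ref{ineq:entropy1}) is obtained by replaying Lemma~\ref{lemma:condent} with $\X=\{\n\}$, and inequality~(\ref{ineq:entropy2}) is proved exactly via optional ignorance, mapping each $f$ to an $f'\in\F$ with $f'[x]=f[\n]$ for all $x$ so that the supremum defining $\mathfrak{R}_{|\D|}(\mathcal{G}_{\F^{\n}})$ is dominated by that defining $\mathfrak{R}_{|\D|}(\mathcal{G}_{\F})$. Your explicit remark about augmenting the $y$-only expectation with auxiliary $x_i$ draws is a point the paper glosses over, but otherwise the arguments coincide.
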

where $\mathcal{G}_{\F^{\n}} = \{g|g(y) = \log f[\n](y), f \in \F\}$.
\begin{proof}
The first inequality~(\ref{ineq:entropy1}) can be derived similarly as Lemma~\ref{lemma:condent}. Since $\F$ is a predictive family, hence there exits a function $h:\F \to \F$, such that $h(f)=f'$ and $\forall x \in \mathcal{X}, f'[x] = f[\emptyset]$.
\begin{align*}
{\mathfrak{R}}_{|\mathcal{D}|}(\mathcal{G}_{\F^{\n}})&= \E_{\D,\sigma}\left[\sup\limits_{f \in \F}\left|\frac{1}{|\mathcal{D}|} \sum\limits_{i=1}^{|\D|} \sigma_i \log f[\n](y_i)\right|\right] \\
&= \E_{\D,\sigma}\left[\sup\limits_{f \in \F}\left|\frac{1}{|\mathcal{D}|} \sum\limits_{i=1}^{|\D|} \sigma_i \log h(f)[x_i](y_i)\right|\right]\\
&\le \E_{\D,\sigma}\left[\sup\limits_{f \in \F}\left|\frac{1}{|\mathcal{D}|} \sum\limits_{i=1}^{|\D|} \sigma_i \log f[x_i](y_i)\right|\right]\numberthis\label{ineq:opt}\\
&= {\mathfrak{R}}_{|\mathcal{D}|}(\mathcal{G}_{\F})
\end{align*}
The inequality~(\ref{ineq:opt}) holds because of $ h(\F) \subseteq \F$.
\end{proof}
Now we prove theorem~\ref{thm:pac}:

\textbf{Theorem 1.} Assume $\forall f \in \F,x \in \X,y \in \Y, \log{f[x](y)} \in [-B,B]$,
for any $\delta \in (0, 0.5)$, with probability at least $1-2\delta$, we have:
\begin{align*}
\left|I_\F(X\to Y)-\hat{I}_\F(X\to Y; \mathcal{D}) \right| \le 4{\mathfrak{R}}_{|\mathcal{D}|}(\mathcal{G}_{\F}) + 2B\sqrt{\frac{2\log{\frac{1}{\delta}}}{|\mathcal{D}|}}
\end{align*}
\begin{proof}
Define $\hat{f} =\argmin\limits_{f \in \F} \sum\limits_{x_i, y_i \in \mathcal{D}}-\log f[x_i](y_i)$ and $\hat{f}_{\n} =\argmin\limits_{f \in \F}\sum\limits_{ y_i \in \mathcal{D}}-\log f[\n](y_i)$. Using the triangular inequality we have:
\begin{align*}
& \left|I_\F(X\to Y)-\hat{I}_\F(X\to Y; \mathcal{D}) \right| \\
= & \left|\left(H_{\F}(Y)- H_\F(Y|X)\right) - \left( \frac{1}{|\mathcal{D}|} \sum_{y_i \in \mathcal{D}} -\log \hat{f}_{\n}[\n](y_i) - \frac{1}{|\mathcal{D}|} \sum_{x_i, y_i \in \mathcal{D}} -\log \hat{f}[x_i](y_i)\right)\right| \\
\le & \left|\left(H_{\F}(Y)- \frac{1}{|\mathcal{D}|} \sum_{y_i \in \mathcal{D}} -\log \hat{f}_{\n}[\n](y_i)\right) - \left( H_\F(Y|X) - \frac{1}{|\mathcal{D}|} \sum_{x_i, y_i \in \mathcal{D}} -\log \hat{f}[x_i](y_i)\right)\right| \\
\le & \left|H_\F(Y|X)-\frac{1}{|\mathcal{D}|} \sum_{x_i, y_i \in \mathcal{D}} -\log \hat{f}[x_i](y_i)\right| + \left|H_{\F}(Y)-\frac{1}{|\mathcal{D}|} \sum_{y_i \in \mathcal{D}} -\log \hat{f}_{\n}[\n](y_i)\right| \numberthis \label{ineq:tri}\\
\end{align*}
For simplicity let $$D_{Y|X} = \left|H_\F(Y|X)-\frac{1}{|\mathcal{D}|} \sum_{x_i, y_i \in \mathcal{D}} -\log \hat{f}[x_i](y_i)\right|$$ and $$D_{Y} = \left|H_{\F}(Y)-\frac{1}{|\mathcal{D}|} \sum_{y_i \in \mathcal{D}} -\log \hat{f}_{\n}[\n](y_i)\right| $$
With inequality~(\ref{ineq:tri}), Lemma~\ref{lemma:condent} and Lemma~\ref{lemma:ent}, we have:
\begin{align*}
    &\Pr\left(\left|I_\F(X\to Y)-\hat{I}_\F(X\to Y; \mathcal{D}) \right| > 4{\mathfrak{R}}_{|\mathcal{D}|}(\mathcal{G}_{\F}) + 2B\sqrt{\frac{2\log{\frac{1}{\delta}}}{|\mathcal{D}|}} \right)\\
    &\le \Pr\left(D_{Y|X} + D_Y > 4{\mathfrak{R}}_{|\mathcal{D}|}(\mathcal{G}_{\F}) + 2B\sqrt{\frac{2\log{\frac{1}{\delta}}}{|\mathcal{D}|}} \right) \tag{Inequality~(\ref{ineq:tri})}\\
    &\le \Pr\left( \left(D_{Y|X}> 2{\mathfrak{R}}_{|\mathcal{D}|}( \mathcal{G}_{\F}) + B\sqrt{\frac{2\log{\frac{1}{\delta}}}{|\mathcal{D}|}}
    \right)\lor \left(D_Y > 2{\mathfrak{R}}_{|\mathcal{D}|}( \mathcal{G}_{\F}) + B\sqrt{\frac{2\log{\frac{1}{\delta}}}{|\mathcal{D}|}}\right) \right)\\
    &\le \Pr\left(D_{Y|X}> 2{\mathfrak{R}}_{|\mathcal{D}|}( \mathcal{G}_{\F}) + B\sqrt{\frac{2\log{\frac{1}{\delta}}}{|\mathcal{D}|}}
    \right)+ \Pr \left(D_Y > 2{\mathfrak{R}}_{|\mathcal{D}|}( \mathcal{G}_{\F}) + B\sqrt{\frac{2\log{\frac{1}{\delta}}}{|\mathcal{D}|}}\right) \tag{Union bound}\\
    &\le 2\delta \tag{Lemma~\ref{lemma:condent} and Lemma~\ref{lemma:ent}}
\end{align*}
Hence we have:
\begin{align*}
    \Pr\left(\left|I_\F(X\to Y)-\hat{I}_\F(X\to Y; \mathcal{D}) \right| \le 4{\mathfrak{R}}_{|\mathcal{D}|}(\mathcal{G}_{\F}) + 2B\sqrt{\frac{2\log{\frac{1}{\delta}}}{|\mathcal{D}|}} \right) \ge 1-2\delta
\end{align*}
which completes the proof.
\end{proof}

%\sj{Need citations for these proofs}

\subsection{Proof of Corollary \ref{cor:rsquare}}
\corrsquare*
The proof is an adaptation of the proof for theorem 3 in \citet{Kakade2008OnTC}. 
\begin{proof}

From theorem \ref{thm:pac} we have:
\begin{align*}
\left|I_\F(X\to Y)-\hat{I}_\F(X\to Y; \mathcal{D}) \right| \le 4{\mathfrak{R}}_{|\mathcal{D}|}(\mathcal{G}_{\F}) + 2B\sqrt{\frac{2\log{\frac{1}{\delta}}}{|\mathcal{D}|}}    
\end{align*}
In the following $\lVert \left(\begin{matrix}W,b\end{matrix}\right) \rVert_2$ is the matrix 2-norm of $\left(\begin{matrix}W,b\end{matrix}\right)$, then the Rademacher term can be bounded as follows:
\begin{align*}
    {\mathfrak{R}}_{|\mathcal{D}|}( \mathcal{G}_{\F}) &= \frac{1}{|\D|}\E_{\sigma}\left[\sup\limits_{W,b,\lVert (W,b) \rVert_2 \le 1} \left|\sum\limits_{i=1}^{|\D|} \sigma_i \left(\log{\frac{1}{\sqrt{2\pi}}} - \frac{1}{2} \lVert y_i -Wx_i-b \rVert_2^2\right)\right|\right]\\
    &\le \frac{1}{|\D|}\E_{\sigma}\left[\sup\limits_{W,b,\left\lVert \left(\begin{matrix}W,b\end{matrix}\right) \right\rVert_2 \le 1} \left|\sum\limits_{i=1}^{|\D|} \sigma_i \left( - \frac{1}{2} \lVert y_i - Wx_i-b\rVert_2^2 \right)\right|\right] + \frac{1}{|\D|}\E_{\sigma}\left[\left|\sum\limits_{i=1}^{|\D|} \sigma_i \log{\frac{1}{\sqrt{2\pi}}}\right|\right]\numberthis\\
\end{align*}

The second term in RHS can be bounded as follows:
\begin{align*}
    \frac{1}{|\D|}\E_{\sigma}\left[\left|\sum\limits_{i=1}^{|\D|} \sigma_i \log{\frac{1}{\sqrt{2\pi}}}\right|\right] &\le \frac{1}{|\D|}\sqrt{\E_{\sigma}\left[\left(\sum\limits_{i=1}^{|\D|} \sigma_i \log{\frac{1}{\sqrt{2\pi}}}\right)^2\right]} \tag{concavity of $x^{\frac12}$}\label{ineq:2pi}\\
    &=\frac{1}{|\D|}\sqrt{|\D|*(\log{\frac{1}{\sqrt{2\pi}}})^2}\tag{Independence of $\sigma_i$s}\\
    &= \sqrt{\frac{(\log{\frac{1}{\sqrt{2\pi}}})^2}{|\D|}}\numberthis\label{ineq:rhs}
\end{align*}
The first term in RHS can be bounded as follows:
\begin{align*}
    &\frac{1}{|\D|}\E_{\D,\sigma}\left[\sup\limits_{W,b,\lVert \left(\begin{matrix}W,b\end{matrix}\right) \rVert_2 \le 1} \left|\sum\limits_{i=1}^{|\D|} \sigma_i \left( - \frac{1}{2} \lVert y_i - Wx_i-b\rVert^2 \right)\right|\right]\\
    &= \frac{1}{2|\D|}\E_{\D,\sigma}\left[\sup\limits_{W,b,\lVert \left(\begin{matrix}W,b\end{matrix}\right) \rVert_2 \le 1} \left|\sum\limits_{i=1}^{|\D|} \sigma_i \left(\lVert y_i - Wx_i-b\rVert^2 \right)\right|\right] \\
    &\le \frac{\max_i \lVert y_i \rVert_2^2}{2}\sqrt{\frac{1}{|\D|}} +\max_i \lVert x_i \rVert_2\sqrt{\frac{\max_i \lVert y_i\rVert^2}{|\D|}} \\
    &\qquad \qquad +\frac{1}{2|\D|}\E_{\D,\sigma}\left[\sup\limits_{W,b,\lVert \left(\begin{matrix}W,b\end{matrix}\right) \rVert_2 \le 1} \left|\sum\limits_{i=1}^{|\D|} \sigma_i \left( \lVert Wx_i+b \rVert^2 \right)\right|\right]\numberthis\label{ineq:y4}\\
    &\le \frac{\max_i \lVert y_i \rVert_2^2}{2}\sqrt{\frac{1}{|\D|}} +\max_i \lVert x_i \rVert_2\sqrt{\frac{\max_i \lVert y_i\rVert^2}{|\D|}} \\
    &\qquad \qquad +\frac{\max_i \lVert x_i \rVert_2}{2|\D|}\E_{\D,\sigma}\left[\sup\limits_{W,b,\lVert \left(\begin{matrix}W,b\end{matrix}\right) \rVert_2 \le 1} \left|\sum\limits_{i=1}^{|\D|} \sigma_i \left( \lVert Wx_i+b \rVert \right)\right|\right]\numberthis\label{ineq:y2}\\
    &\le \frac{\max_i \lVert y_i \rVert_2^2}{2}\sqrt{\frac{1}{|\D|}}+\max_i \lVert x_i \rVert_2\sqrt{\frac{\max_i \lVert y_i\rVert_2^2}{|\D|}} +\frac{\max_i \lVert x_i \rVert_2}{2}\sqrt{\frac{\max_i \lVert x_i \rVert_2^2}{|\D|}}\numberthis\\
    &\le \frac{M}{\sqrt{4|\D|}}
\end{align*}
The inequalities~(\ref{ineq:y2}) and ~(\ref{ineq:y4}) follow the same proof
in~(\ref{ineq:rhs}).

Hence we have:
\begin{align*}
    {\mathfrak{R}}_{|\mathcal{D}|}( \mathcal{G}_{\F}) \le \frac{M}{\sqrt{4|\D|}}\numberthis\label{ineq:RR}
\end{align*}

In this example, we can bound the upper bound of functions $g \in \mathcal{G}_{\F}$ by
\begin{align*}
    B &= \sup\limits_{x \in \X, y\in \Y, \lVert \left(\begin{matrix}W,b\end{matrix}\right) \rVert_2 \le 1}\left|\left(\log{\frac{1}{\sqrt{2\pi}}} - \frac{1}{2} \lVert y -Wx-b \rVert_2^2\right)\right|\\
    &\le \sup\limits_{x \in \X, y\in \Y, \lVert \left(\begin{matrix}W,b\end{matrix}\right) \rVert_2\le 1} \log{\frac{1}{\sqrt{2\pi}}} + \frac{1}{2}\left( \lVert y \rVert_2^2 + \lVert Wx+b\rVert_2^2 + 2 \lVert y\rVert \lVert Wx+b \rVert\right) \\
    &\le \log{\frac{1}{\sqrt{2\pi}}} + \frac12 (k_x+k_y)^2 < M
\end{align*}

Combining inequality~(\ref{ineq:RR}) we arrive at the theorem.
\end{proof}
\subsection{Proof of Theorem \ref{thm:tree}}
\thmtree*
\begin{proof}

Let $C_{\D}(g^*)$ be the estimated sum of edge weights on dataset $\D$ of the tree $g^*$, i.e.,
\begin{align*}
    C(g^*) = \sum_{i=2}^m \hat{I}_{\F_{t(g^*)(i), i}}(X_{t(g)(i)} \rightarrow X_{i};\D).
\end{align*}
where $t(g): \mathbb{N} \rightarrow \mathbb{N}$ is the function mapping each non-root node of directed tree $g$ to its parent. The same notation for tree $\hat{g}$. Let $$\epsilon=\max\limits_{i,j}\left\{\left|I_\F(X_i\to X_j)-\hat{I}_\F(X_i\to X_j; \mathcal{D}) \right|\right\}$$ be the maximum absolute estimation error of single edge weight. By the definition of $\epsilon$ we have $\forall g, \left|C(\hat{g})-C_D(\hat{g})\right| \le (m-1)\epsilon$, then:
\begin{align}
C(\hat{g})+(m-1)\epsilon \ge C_{\D}(\hat{g}) \ge C_{\D}(g^*) \ge C(g^*) -(m-1)\epsilon
\label{eq:edge}
\end{align}

From lemma~\ref{lemma:ent} and lemma~\ref{lemma:condent} we have:
\begin{align*}
 &\Pr\left(\epsilon > \max\limits_{i,j}\left\{ 2{\mathfrak{R}}_{\mathcal{D}_{i,j}}( \mathcal{G}_{i,j}) + 2{\mathfrak{R}}_{\mathcal{D}_{j}}( \mathcal{G}_{j})+ B\sqrt{2\log{\frac{1}{\delta}}}(|\mathcal{D}_j|^{-\frac12}+|\mathcal{D}_{i,j}|^{-\frac12})\right\}\right)\\
 &\le \Pr\left(\exists i,j, \left|I_{\F_{i,j}}(X_i\to X_j)-\hat{I}_{\F_{i,j}}(X_i\to X_j; \mathcal{D}) \right|> 2{\mathfrak{R}}_{\mathcal{D}_{i,j}}( \mathcal{G}_{i,j}) + 2{\mathfrak{R}}_{\mathcal{D}_{j}}( \mathcal{G}_{j})+ B\sqrt{2\log{\frac{1}{\delta}}}(|\mathcal{D}_j|^{-\frac12}+|\mathcal{D}_{i,j}|^{-\frac12})\right)\\
 &\le \Pr\left( \exists i,j, \left|\left(H_{\F_j}(X_j)- \frac{1}{|\mathcal{D}_j|} \sum_{x_j \in \mathcal{D}_j} -\log \hat{f}_{\n}[\n](x_j)\right) - \left( H_{\F_{i,j}}(X_j|X_i) - \frac{1}{|\mathcal{D}_{i,j}|} \sum_{x_i, x_j \in \mathcal{D}_{i,j}} -\log \hat{f}[x_i](x_j)\right) \right|\right.\\
 &\qquad > \left.2{\mathfrak{R}}_{\mathcal{D}_{i,j}}(\mathcal{G}_{i,j}) + 2{\mathfrak{R}}_{\mathcal{D}_{j}}( \mathcal{G}_{j})+ B\sqrt{2\log{\frac{1}{\delta}}}(|\mathcal{D}_j|^{-\frac12}+|\mathcal{D}_{i,j}|^{-\frac12})\right)\\
 &\le \Pr\left(\exists i,j, \left( \left|H_{\F_j}(X_j)- \frac{1}{|\mathcal{D}_j|} \sum_{x_j \in \mathcal{D}_j} -\log \hat{f}_{\n}[\n](x_j)\right|>2{\mathfrak{R}}_{\mathcal{D}_{j}}( \mathcal{G}_{j}) +  B\sqrt{2\log{\frac{1}{\delta}}}|\mathcal{D}_j|^{-\frac12}\right)\right.\\
 &\qquad \left.\lor \left(\left| H_{\F_{i,j}}(X_j|X_i) - \frac{1}{|\mathcal{D}_{i,j}|} \sum_{x_i, x_j \in \mathcal{D}_{i,j}} -\log \hat{f}[x_i](x_j) \right|
 > 2{\mathfrak{R}}_{\mathcal{D}_{i,j}}( \mathcal{G}_{i,j}) +  B\sqrt{2\log{\frac{1}{\delta}}}|\mathcal{D}_{i,j}|^{-\frac12}\right)\right)\\
 &\le m(m-1)2\delta \tag{By lemma~\ref{lemma:condent},~\ref{lemma:ent} and union bound}
\end{align*}

Hence
\begin{align*}
    \Pr\left(\epsilon \le \max\limits_{i,j}\left\{ 2{\mathfrak{R}}_{\mathcal{D}_{i,j}}( \mathcal{G}_{i,j}) + 2{\mathfrak{R}}_{\mathcal{D}_{j}}( \mathcal{G}_{j}) +  B\sqrt{2\log{\frac{1}{\delta}}}(|\mathcal{D}_j|^{-\frac12}+|\mathcal{D}_{i,j}|^{-\frac12})\right\}\right) \ge 1-m(m-1)2\delta \numberthis \label{ineq:m(m-1)}
\end{align*}
Then combining inequality~(\ref{eq:edge}) and (\ref{ineq:m(m-1)}) we arrive at the result.
\end{proof}

% \subsection{Corollary of Theorem \ref{thm:tree}}

% \begin{corollary}
% Assume $\forall i, \forall x \sim X_i \in  \mathbb{R}^d, \parallel x \parallel_2 \le k$. If $\F = \{\mathcal{N}(w^T(x,1),I)| \parallel w \parallel_2 \le 1\}$ and a set of samples $D$ drawn from the true joint distribution $P(\vec{X})$. Then for $\forall \delta>0$, with probability $1-\delta$:
% \begin{align}
% C(\hat{T}) \ge C(T^*) - 4(m-1)
% \left[(\sqrt{2}+1)\frac{k}{\sqrt{|D|}} + 3B\sqrt{\frac{2\log{\frac{1}{\delta}}}{|\mathcal{D}}\right]=C(T^*) -  O(|D|^{\frac{1}{2}})
% \end{align}
% \end{corollary}
% \begin{proof}

% By the well known bound of empirical Rademacher complexity, we have ${\mathfrak{R}}_{|\mathcal{D}|}( \mathcal{G})\le \sqrt{2}\frac{k}{\sqrt{n}}$, and ${\mathfrak{R}}_{|\mathcal{D}|}( \mathcal{G})\le \frac{k^2}{\sqrt{n}}$, combing theorem 2 we arrive at the result.
% \end{proof}

\section{Analysis of approximate estimators for Shannon information}
\label{anal:approx}
We consider two approximate estimators for Shannon information.
The first is the CPC (or InfoNCE in \citet{poole2019variational}) estimator ($\cpc$) proposed by \citet{Oord2018RepresentationLW}:
\begin{align}
\cpc = \E\left[\frac{1}{N}\sum\limits_{i=1}^N \log{\frac{f_{\theta}(x_i,y_i)}{\frac1N \sum_{j=1}^N f_{\theta}(x_i,y_j)}}\right]\le I(X;Y)
\end{align}
where the expectation is over N independent samples form the joint distribution $\prod\limits_i p(x_i,y_i) $. 

The second is the NWJ estimator ($\nwj$) proposed by \citet{Nguyen2010EstimatingDF}:
\begin{align}
    \nwj = \E_{x,y \sim p(x,y)}\left[f_{\theta}(x,y)\right] -e^{-1}{ \E_{x,y \sim p(x)p(y)}\left[e^{f_{\theta}(x,y)}\right]} \le I(X;Y)
\end{align}
% \begin{align}
% \mine = \E_{x,y \sim P(x,y)}\left[f_{\theta}(x,y)\right] -\log{ \E_{x,y \sim P(x)P(y)}\left[e^{f_{\theta}(x,y)}\right]} \le MI(X;Y)
% \end{align}

In both cases, $f_{\theta}$ is a parameterized function, and the objectives are to maximize these lower bounds parameterized by $\theta$ to approximate mutual information. Ideally, with sufficiently flexible models and data, we would be able recover the true mutual information. However, these ideal cases does not carry over to practical scenarios. 

For $\cpc$, \citet{Oord2018RepresentationLW} show that $\cpc$ is no larger than $\log N$, where $N$ is the batch size. This means that the $\cpc$ estimator will incur large bias when $I(X;Y) \ge \log{N}$. We provide a proof for completeness as follows.
\begin{prop}
$\forall f_\theta: \gX \times \gY \to \R^{+}$, 
\begin{align}
    \cpc \leq \log N .
\end{align}
\end{prop}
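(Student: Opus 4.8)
The plan is to prove the inequality \emph{pointwise}, that is, for every fixed realization of the $N$ i.i.d.\ samples $(x_1,y_1),\dots,(x_N,y_N)$ the bracketed average is already at most $\log N$, and then conclude for $\cpc$ by monotonicity of the expectation. The entire argument hinges on one structural observation: the denominator of the $i$-th summand, $\frac{1}{N}\sum_{j=1}^N f_\theta(x_i,y_j)$, contains the ``diagonal'' contribution $\frac{1}{N} f_\theta(x_i,y_i)$, which is exactly $1/N$ times the numerator of that same summand.

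First I would fix an index $i$ and rewrite its logarithm as $\log \frac{N\, f_\theta(x_i,y_i)}{\sum_{j=1}^N f_\theta(x_i,y_j)}$. Because the hypothesis is that $f_\theta:\gX\times\gY\to\R^{+}$, every term $f_\theta(x_i,y_j)$ is nonnegative, so discarding all summands in the denominator except the one with $j=i$ can only decrease it, giving $\sum_{j=1}^N f_\theta(x_i,y_j)\ge f_\theta(x_i,y_i)$. Substituting this lower bound on the denominator yields $\frac{N\, f_\theta(x_i,y_i)}{\sum_{j=1}^N f_\theta(x_i,y_j)}\le N$, so each summand is at most $\log N$.

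Since all $N$ summands are individually bounded by $\log N$, their average is bounded by $\log N$ as well, and this holds uniformly over every sample draw; taking the expectation over $\prod_i p(x_i,y_i)$ then preserves the inequality by monotonicity, establishing $\cpc\le\log N$. There is no genuine obstacle in this argument; the only point requiring care is the explicit use of the positivity assumption $f_\theta>0$, which is precisely what licenses dropping the off-diagonal terms to lower-bound the denominator by its diagonal term alone. (Strictly speaking one only needs $f_\theta\ge 0$ together with $f_\theta(x_i,y_i)>0$ almost surely for the logarithms to be well defined.)
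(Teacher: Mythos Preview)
Your proof is correct and is essentially identical to the paper's own argument: both lower-bound the denominator $\frac{1}{N}\sum_{j=1}^N f_\theta(x_i,y_j)$ by its diagonal term $\frac{1}{N}f_\theta(x_i,y_i)$ using nonnegativity of $f_\theta$, cancel, and then take the expectation. The paper presents this in a two-line display without the surrounding commentary, but the content is the same.
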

\begin{proof}
We have:
\begin{align}
\cpc & := \E\left[\frac{1}{N}\sum\limits_{i=1}^N \log{\frac{f_{\theta}(x_i,y_i)}{\frac1N \sum_{j=1}^N f_{\theta}(x_i,y_j)}}\right] \\
& \leq \E\left[\frac{1}{N}\sum\limits_{i=1}^N \log{\frac{f_{\theta}(x_i,y_i)}{\frac1N  f_{\theta}(x_i,y_i)}}\right] \leq \E\left[\frac{1}{N}\sum\limits_{i=1}^N \log{N}\right] = \log N
\end{align}
which completes the proof.
\end{proof}

% It's straightforward to show that $\cpc \le \log{N}$, which coincides with the empirical result. 

% \citet{Poole2018OnVL} also shows that $\nwj$ suffers from high variance. 
For NWJ, we note that the $\nwj$ involves a term denoted as $\E_{x,y \sim p(x)p(y)}\left[e^{f_{\theta}(x,y)}\right] / e$, which could be dominated by rare data-points that have high $f_\theta$ values. 
Intuitively, this would make it a poor mutual information estimator by optimizing $\theta$.
The NWJ estimator may suffer from high variance when the estimator is optimal~\citep{song2019understanding}, this is also empirically observed in \citet{poole2019variational}. We provide a proof for completeness as follows.

% The optimal value for $\nwj$ achieves when $f_{\theta}(x,y)= 1+\log{\frac{p(x,y)}{p(x)p(y)}}$.
% Given $\{(x_i,y_i)\}_{i=1}^N$ (resp. $\{(\bar{x}_i,\bar{y}_i)\}_{i=1}^N$) as N datapoints independently sampled from the distribution $p(x,y)$ (resp. $p(x)p(y)$), denote the empirical estimation of $\nwj$ as $\hat{I}_{NWJ} =\frac1N\sum_{i=1}^N\left[{f_{\theta}(x_i,y_i)}\right] -  {\frac{e^{-1}}{N}\sum_{i=1}^N\left[e^{f_{\theta}(\bar{x}_i,\bar{y}_i)}\right]}$. %Below we show that when even $f_{\theta}(x,y)= 1+\log{\frac{p(x,y)}{p(x)p(y)}}$, $\hat{I}_{NWJ}$ suffers from high variance:
\begin{prop}
\label{thm:nwj}
Assume that $f_{\theta}$ achieves the optimum value for $\nwj$.
% , which is $f_{\theta}(x,y)= 1+\log{\frac{p(x,y)}{p(x)p(y)}}$, 
%  and that $\E_{p(x,y)}\left[\frac{p(x,y)}{p(x)p(y)}\right]\ge \frac{1}{1-e^{-1}}$.
Then the variance of the empirical NWJ estimator satisfies $\Var\left({ \hat{I}_{\mathrm{NWJ}}}\right) \ge \frac{e^{I(X;Y)}-1}{N} $, where
 $$\hat{I}_{\mathrm{NWJ}} =\frac1N\sum_{i=1}^N\left[{f_{\theta}(x_i,y_i)}\right] -  {\frac{e^{-1}}{N}\sum_{i=1}^N\left[e^{f_{\theta}(\bar{x}_i,\bar{y}_i)}\right]}$$ is the empirical NWJ estimator with $N$ i.i.d. samples $\{(x_i,y_i)\}_{i=1}^N$ from $p(x,y)$ and $N$ i.i.d. samples $\{(\bar{x}_i,\bar{y}_i)\}_{i=1}^N$ from $p(x)p(y)$. 
\end{prop}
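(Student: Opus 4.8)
The plan is to first identify the maximizer $f^*$ of the population objective $\nwj$, then exploit the algebraic form of $f^*$ to reduce the second (product-distribution) term of $\hat{I}_{\mathrm{NWJ}}$ to a simple empirical mean of the density ratio, and finally lower-bound its variance through Jensen's inequality. The independence of the two sample sets is what lets the variance decompose cleanly.

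First I would characterize the optimum. Taking the functional derivative of $\E_{x,y\sim p(x,y)}[f(x,y)] - e^{-1}\E_{x,y\sim p(x)p(y)}[e^{f(x,y)}]$ with respect to $f$ and setting it to zero gives the pointwise stationarity condition $p(x,y) = e^{-1}p(x)p(y)e^{f(x,y)}$, so the optimizer is $f^*(x,y) = 1 + \log\frac{p(x,y)}{p(x)p(y)}$. Writing $r(x,y) = p(x,y)/(p(x)p(y))$ for the density ratio, the crucial simplification is $e^{-1}e^{f^*(x,y)} = r(x,y)$. Consequently the second sum in the empirical estimator collapses to $B := \frac{1}{N}\sum_{i=1}^N r(\bar{x}_i,\bar{y}_i)$, a plain empirical average of $r$ under the product measure, while the first sum is $A := \frac{1}{N}\sum_{i=1}^N f^*(x_i,y_i)$.

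Next, since the joint samples $\{(x_i,y_i)\}$ and the product samples $\{(\bar{x}_i,\bar{y}_i)\}$ are independent, the variance splits as $\Var(\hat{I}_{\mathrm{NWJ}}) = \Var(A) + \Var(B)$; discarding the nonnegative term $\Var(A)$ leaves $\Var(\hat{I}_{\mathrm{NWJ}}) \ge \Var(B) = \frac{1}{N}\Var_{p(x)p(y)}(r)$. I would then evaluate the two moments of $r$ under $p(x)p(y)$: the first moment is $\E_{p(x)p(y)}[r] = \int p(x,y)\,dx\,dy = 1$, and the second moment rewrites as $\E_{p(x)p(y)}[r^2] = \int \frac{p(x,y)^2}{p(x)p(y)}\,dx\,dy = \E_{p(x,y)}[r]$.

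The heart of the argument, and the step I expect to be the main obstacle, is the Jensen bound turning $\E_{p(x,y)}[r]$ into $e^{I(X;Y)}$: because $I(X;Y) = \E_{p(x,y)}[\log r]$ and $\exp$ is convex, we get $\E_{p(x,y)}[r] = \E_{p(x,y)}[e^{\log r}] \ge e^{\E_{p(x,y)}[\log r]} = e^{I(X;Y)}$. Combining this with the moment computations gives $\Var_{p(x)p(y)}(r) = \E_{p(x)p(y)}[r^2] - 1 \ge e^{I(X;Y)} - 1$, and therefore $\Var(\hat{I}_{\mathrm{NWJ}}) \ge (e^{I(X;Y)}-1)/N$, as claimed. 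The only subtlety worth flagging is the tacit assumption that the required moments are finite and that the optimizer $f^*$ is attained exactly, which matches the hypothesis that $f_\theta$ achieves the optimal value.
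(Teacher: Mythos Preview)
Your proposal is correct and follows essentially the same route as the paper: identify the optimizer $f^*(x,y)=1+\log\frac{p(x,y)}{p(x)p(y)}$, use independence of the two sample sets to drop the joint-sample term and reduce to $\Var_{p(x)p(y)}(r)/N$, then compute the second moment via the change-of-measure identity $\E_{p(x)p(y)}[r^2]=\E_{p(x,y)}[r]$ and apply Jensen to obtain the $e^{I(X;Y)}$ lower bound. The paper's proof differs only in presentation order, not in substance.
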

\begin{proof}

Let us denote $z_i = \frac{p(x_i,y_i)}{p(x_i)p(y_i)}$. Clearly $\E_{p(x)p(y)}\left[z_i\right] = 1$. Then we have:
\begin{align}
\Var(z_i) &= \E_{p(x)p(y)}\left[z_i^2\right] - (\E_{p(x)p(y)}\left[z_i\right])^2\nonumber\\
&= \E_{p(x)p(y)}\left[z_i^2\right] -1\nonumber \\
&= \E_{p(x)p(y)}\left[\left(\frac{p(x_i,y_i)}{p(x_i)p(y_i)}\right)^2\right] -1\nonumber\\
&= \E_{p(x,y)}\left[\frac{p(x_i,y_i)}{p(x_i)p(y_i)}\right] -1 \\
&\ge e^{\E_{p(x,y)}\left[\log \frac{p(x_i,y_i)}{p(x_i)p(y_i)}\right]} -1 = e^{I(X; Y)} -1
\end{align}
where we use Jensen's inequality for $\log$ at the last step.
% Hence, by assumption that 
% \begin{align}
%  \log{\Var(z_1)} &= \log{\left(\E_{p(x,y)}\left[\frac{p(x_1,y_1)}{p(x_1)p(y_1)}\right] -1\right)}\nonumber\\
%  &\ge \log{\left(\E_{p(x,y)}\left[\frac{p(x_1,y_1)}{p(x_1)p(y_1)}\right]\right)} -1 \label{ineq:assume}\\ 
%  &\ge {\left(\E_{p(x,y)}\log{\left[\frac{p(x_1,y_1)}{p(x_1)p(y_1)}\right]}\right)} -1 \label{ineq:logjensen}\\
%  &= I(X;Y)-1
% \end{align}

% (\ref{ineq:assume}) holds by assumption  $\E_{p(x,y)}\left[\frac{p(x,y)}{p(x)p(y)}\right]\ge \frac{1}{1-e^{-1}}$ and (\ref{ineq:logjensen}) follows the Jensen's inequality. 

% Then we have:
% \begin{align}
% \Var\left({\frac1N\sum\limits_{i=1}^N z_i}\right) &= \frac{\Var\left(z_1\right)}{N} \ge \frac{e^{I(X;Y)} - 1}{N} 
% \end{align}

From \citet{Nguyen2010EstimatingDF}, we have:
\begin{align}
    f_{\theta}(x,y)= 1+\log{\frac{p(x,y)}{p(x)p(y)}}.
\end{align}
for all $x, y$. 
Since $\{(x_i,y_i)\}_{i=1}^N$ (resp. $\{(\bar{x}_i,\bar{y}_i)\}_{i=1}^N$) are $N$ datapoints independently sampled from the distribution $p(x,y)$ (resp. $p(x)p(y)$), we have
\begin{align}
    \Var\left(\hat{I}_{NWJ}\right) &= \Var\left(\frac1N\sum_{i=1}^N\left[{f_{\theta}(x_i,y_i)}\right] -  {\frac{e^{-1}}{N}\sum_{i=1}^N\left[e^{f_{\theta}(\bar{x}_i,\bar{y}_i)}\right]}\right) \nonumber\\
    &\ge \Var\left({\frac{e^{-1}}{N}\sum_{i=1}^N\left[e^{f_{\theta}(\bar{x}_i,\bar{y}_i)}\right]}\right)\nonumber\\
    &= \Var\left({\frac{1}{N}\sum\limits_{i=1}^N {z}_i}\right) \ge \frac{e^{I(X;Y)}-1}{N} 
\end{align}
% So $\Var\left(\frac1N\sum\limits_{i=1}^N z_i\right) = \frac{Var(z_1)}{{N}}$,then the following inequality holds: 
% \begin{align}
%     \Var\left(\log\left(\frac1N\sum\limits_{i=1}^N z_i\right)\right) \approx \Var\left(\frac1N\sum\limits_{i=1}^N z_i\right) / (\E\left[\frac1N\sum\limits_{i=1}^N z_i\right])^2 = \frac{\Var(z_1)}{{N}} \ge \frac{e^{MI(X;Y)-1}}{N} \label{eq:approx}
% \end{align}
which completes the proof.
\end{proof}
% The lower bound of the variance highly coincides with the empirical results in \citet{poole2019variational}, which shows the variance increases in exponential to mutual information when sample number $N$ is finite. Similar to $\nwj$, the mutual information estimator $\mine$ proposed by \citet{Belghazi2018MutualIN} is also a unnormalized mutual information estimator. The only difference between $\mine$ and $\nwj$ is the second term of $\mine$ is the log partition function, which is notoriously difficult to be estimated. 

\section{The new algorithm for Chu-Liu tree construction}

See Algorithm~\ref{alg}; $\hat{I}_{\F_{i,j}}(X_i \to X_j;\{\hat{X}_i,\hat{X}_j\})$ denotes the empirical $\F$-information.
% Denote $\hat{I}_{\F}(X_i \to X_j;\D)$ as the estimation of $I_{\F}(X_i \to X_j)$ on the dataset $\D$, 
\begin{algorithm}[htbp]
\caption{Construct Chow-Liu Trees with $\F$-Information}
\label{alg}
\begin{algorithmic}[1]
\REQUIRE $\mathcal{D} = \{\hat{X}_i\}_{i=1}^m$, with each $\hat{X}_i$ being a set of datapoints sampled from the underlying distribution of random variable $X_i$. The set of function families $\{\F_{i,j}\}_{i,j=1,i\not = j}^m$ between all the nodes.
\FOR{$i=1,\dots, m$}
\FOR{ $j=1 ,\dots, m$}
\IF{$i \not = j$}
\STATE Calculate the edge weight: $e_{i\to j} = \hat{I}_{\F_{i,j}}(X_i \to X_j;\{\hat{X}_i,\hat{X}_j\})$.
\ENDIF
\ENDFOR
\ENDFOR
\STATE Construct the fully connected graph $G=(V,E)$, with node set $V=(X_1,\dots,X_m)$ and edge set $E= \{e_{i\to j}\}_{i,j=1,i\not = j}^m$.
\STATE Construct the maximal directed spanning tree $g$ on $G$ by Chow-Liu algorithm, where mutual information is replaced by $\F$-information.
\RETURN $g$
\end{algorithmic}
\end{algorithm}

\section{Detailed Experiments setup}

\subsection{Chu-Liu tree construction}
\label{exp:chowliu}
Figure~\ref{fig:ChowL} shows the Chu-Liu tree construction of Simulation-1$\sim$Simulation-6. The Simulation-A and Simulation-B in the main body correspond to Simulation-1 and Simulation-4.
\paragraph{Simulation-1 $\sim$ Simulation-3} :

The ground-truth Chu-Liu tree is a star tree (i.e. all random variables are conditionally independent given $X_1$). We conduct all experiments for 10 times, each time with random simulated orthogonal matrices $\{W_i\}_{i=2}^{20}$. Simulation-1: $X_1 \sim \mathcal{U}(0,10)$ and $X_i \mid X_1 \sim \mathcal{N}(W_iX_1,6I),(2\le i\le20)$; Simulation-2: $X_1 \sim \mathcal{U}(0,10)$ and $X_i \mid X_1 \sim W_i\mathcal{E}(X_1+\epsilon_i),(2\le i\le20)$, $\epsilon_i \sim \mathcal{E}(0.1)$; Simulation-3 is a mixed version:$X_1 \sim \mathcal{U}(0,10),X_i \mid X_1 \sim \frac12\mathcal{N}(W_i X_1,6I) + \frac12W_i\mathcal{E}(X_1+\epsilon_1),(2\le i\le20)$.

\paragraph{Simulation-4 $\sim$ Simulation-6} :

The ground-truth Chu-Liu tree is a tree of depth two. We conduct all experiments for 10 times, each time with random simulated orthogonal matrices $\{W_i\}_{i=2}^{7}$. Simulation-4: $X_1 \sim \mathcal{U}(0,10)$, $X_i \mid X_1 \sim \mathcal{N}(W_iX_1,2I)(i=2,3)$, $X_i \mid X_2 \sim \mathcal{N}(W_iX_2,2I)(i=4,5)$, $X_i \mid X_3 \sim \mathcal{N}(W_iX_3,2I)(i=6,7)$; Simulation-5: $X_1 \sim \mathcal{U}(0,10)$, $X_i \mid X_1 \sim \mathcal{E}(X_1+\epsilon_i)(i=2,3)$, $X_i \mid X_2 \sim W_i\mathcal{E}(X_2+\epsilon_i)(i=4,5)$, $X_i \mid X_3 \sim W_i\mathcal{E}(X_3+\epsilon_i)(i=6,7)$, $\epsilon_i \sim \mathcal{E}(0.1)$; Simulation-6 is a mixed version: $X_1 \sim \mathcal{U}(0,10)$, $X_i \mid X_1 \sim W_i\mathcal{E}(X_1+\epsilon_i)(i=2,3)$, $X_i \mid X_2 \sim \mathcal{N}(W_iX_2,2I)(i=4,5)$, $X_i \mid X_3 \sim \mathcal{N}(W_iX_3,2I)(i=6,7)$,$\epsilon_i \sim \mathcal{E}(0.1)$.

\begin{figure}[h]
  \centering
  {\includegraphics[width=1.0\textwidth]{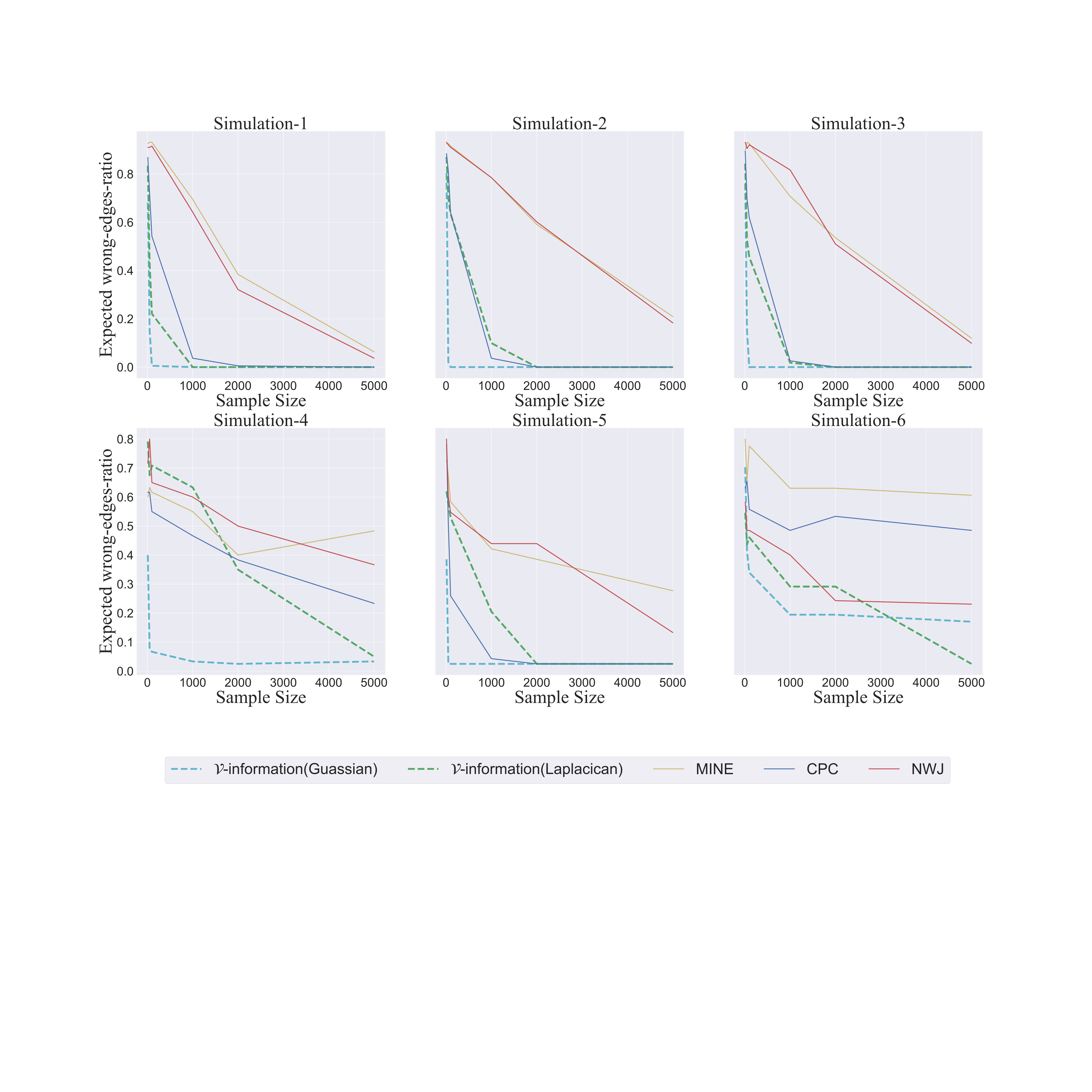}\label{fig:f1}}

  \caption{Chu-Liu Tree Construction: The expected wrong-edges-ratio of algorithm \ref{alg} with different $\F$ and other mutual information estimators-based algorithms from sample size $10$ to $5\times 10^3$. 
}
  \label{fig:ChowL}
\end{figure}

\subsection{Fairness}
\label{exp:fair}
\label{connection:fairness}
We can adapt the $\F$-information perspective to fairness. Denote the random variable that represents sensitive data and the representation as $U$ and $Z$ respectively. Assume $U$ is discrete and $\F$ belongs to preditive family~\ref{def:predfamily}. Then we have $H_{\F}(U) = H(U)$ as long as $\F$ has $\mathrm{softmax}$ on the top and belongs to predictive family. In this case, minimizing $I_{\F}(Z \rightarrow U)$ equals to minimize $-H_{\F}(Y|X)$. Let the joint distribution of $Z$ and $U$ be paramterized by $\phi$. Hence the final objective is:
\begin{align*}
\min\limits_{\phi}\{I_{\F}(u;z)\}
=\min\limits_{\phi}\left(\sup\limits_{f\in \F}\mathrm{E}_{z,u \sim q_{\phi}(z,u)}[\log P_f(z|u)]\right)
\end{align*}

In \citet{ Edwards2015CensoringRW, Madras2018LearningAF, Louizos2015TheVF,Song2018LearningCF}, functions in $\F$ are parameterized by a discriminator.

For the $(F_i,F_j)$ elements described in the main body, please refer to figure~\ref{fig:tsne}b. The three datasets are: the UCI Adult dataset\footnote{\hyperlink{https://archive.ics.uci.edu/ml/datasets/adult}{https://archive.ics.uci.edu/ml/datasets/adult}} which has gender as the sensitive attribute; the UCI German credit dataset\footnote{\hyperlink{https://archive.ics.uci.edu/ml/datasets}{https://archive.ics.uci.edu/ml/datasets}} which has age as the sensitive attribute and the Heritage Health dataset\footnote{\hyperlink{https://www.kaggle.com/c/hhp}{https://www.kaggle.com/c/hhp}} which has the 18 configurations of ages and gender as the sensitive attribute.

The models in the figure are:

$\F_A=\{f:\mathcal{Z} \rightarrow \mathcal{P}(\mathcal{U})|f[z](u) = \sum\limits_{(z_i,u_i) \in \D}\frac{e^{\lVert z_i-z\rVert_2^2/h}}{\sum_{(z_i,u_i) \in \D}e^{\lVert z_i-z\rVert_2^2}/h}*\mathbb{I}(u_i=u),h \in \mathbb{R}\}$, where $\D$ is the training set.

$\F_B=\{f: f[z]=\mathrm{softmax}(g(z))$\}, where $g$ is a two-layer MLP with Relu as the activation function.

$\F_C=\{f:f[z]=\mathrm{softmax}(g(z))\}$, where $g$ is a three-layer MLP with LeakyRelu as the activation function.

We further visualize a special case of the $(\F_A,\F_B)$ pair in figure~\ref{fig:tsne}a, where the $\F_i=\{f:\mathcal{Z} \rightarrow \mathcal{P}(\mathcal{U})|f[z](u) = \sum\limits_{(z_i,u_i) \in \D}\frac{e^{\lVert z_i-z\rVert_2^2/h}}{\sum_{(z_i,u_i) \in \D}e^{\lVert z_i-z\rVert_2^2}/h}*\mathbb{I}(u_i=u),h \in \mathbb{R}\}$ explicitly makes the features of different sensitivity attributes more evenly spread, and functions in $\F_B$ is a simple two layers MLP with softmax at the top. The leaned features by $\F_A$-information minimization appear more evenly spread as expected, however, the attacker functions in $\F_B$ can still achieve a high AUC of $0.857$.

The $(i,j)$ elements of tables in Figure~\ref{fig:tsne}b stand for using function family $\F_i$ to attack features trained with $\F_{j}$-information minimization. The diagonal elements in the matrix are usually the smallest in rows, indicating that the attacker function family $\F_i$ extracts more information on featured trained with $\F_{j(j\not=i)}$-information minimization.

\begin{figure}[htbp]
  \centering
  \includegraphics[width=1\textwidth]{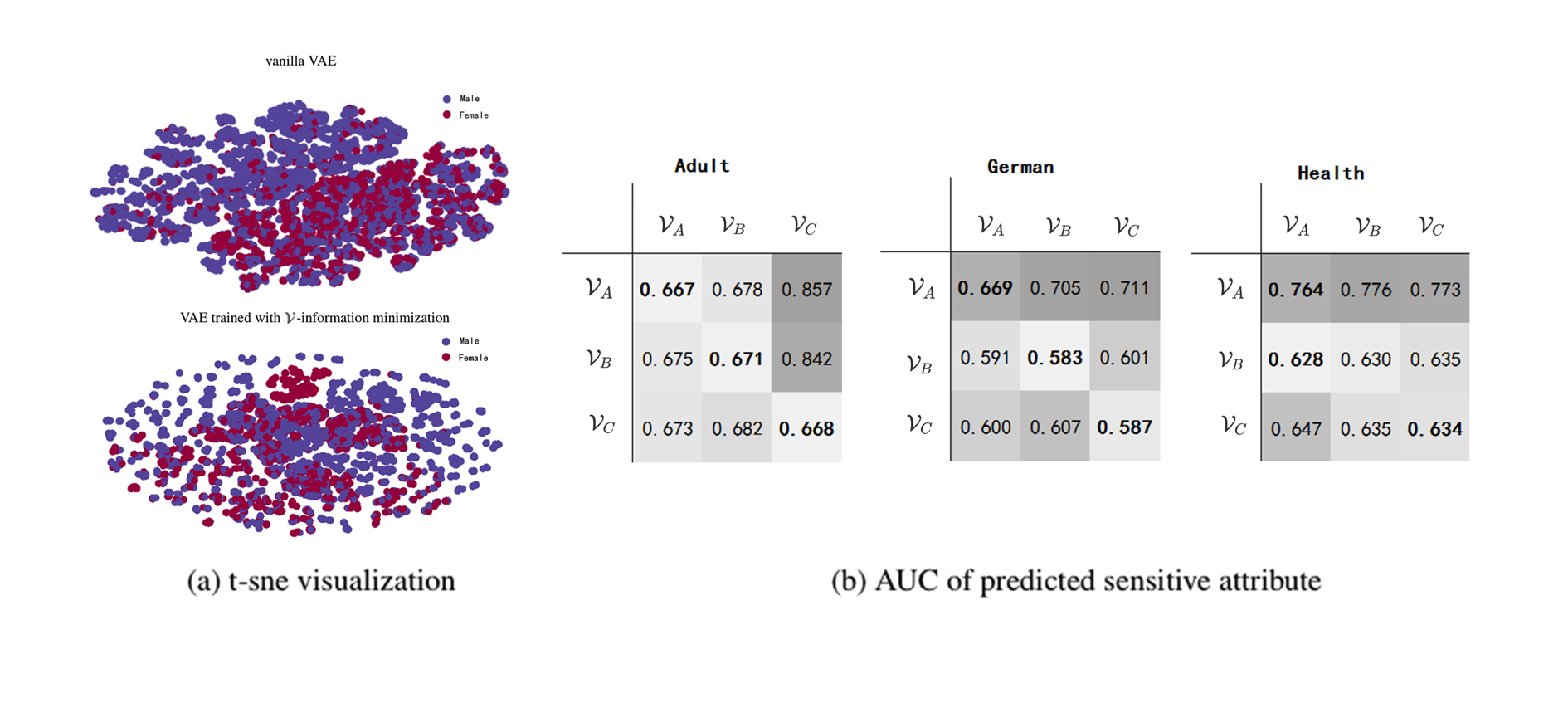}
  \caption{T-SNE Visualization and AUC of predicted sensitive attribute}
  \label{fig:tsne}
\end{figure}

% \section{Supplements for properties}
% \subsection{Examples for increased entropy under processing}
% \label{ex:entropy}
% For a counter-example to the non-increasing property of $\Omega$-entropy, consider instead $\F$ as a family of functions that can only capture high level structure, and $t$ as a function that deterministically scrambles that high level structure into lower order bits. Specifically, consider $Y = \{0, 1, 2, 3\}$ to occur with probabilities $\{0.1, 0.1, 0.4, 0.4\}$, and $\F_{\n \rightarrow P(Y)}$ to contain the set of distributions assigning equal probability to outputs 1 and 2, and equal probability to outputs 3 and 4. That is, $\forall f \in \F$, $f[\n](1)= f[\n](2)$, and $f[\n](3)= f[\n](4)$. If we then choose $t$ as the permutation swapping the output values of 2 and 3, we will find that $\F$-entropy has increased.

\section{Minimality of Predictive Family}

Define $\F_{\X \rightarrow \mathcal{P}(\Y)} = \{g:\X \to \mathcal{P}(\Y)|\exists f \in \F,\forall x \in \X, g[x]=f[x]\}$. Similarly define $\F_{\n \rightarrow \mathcal{P}(\Y)} = \{g:\n \to \mathcal{P}(\Y)|\exists f \in \F, g[\n]=f[\n]\}$. Intuitively, $\F_{\X \rightarrow \mathcal{P}(\Y)}$ (resp. $\F_{\n \rightarrow \mathcal{P}(\Y)}$) restricts the domain of functions in $\F$ to $\X$ (resp. $\n$).

\paragraph{Non-Negativity}As we demonstrated in Proposition~\ref{prop:property}, optional-ignorance guarantees that information will be non-negative for any $X$ and $Y$. Conversely, given any discrete $X$, $Z$, $\F_{\n \rightarrow \mathcal{P}(\Y)}$, $\F_{\X \rightarrow \mathcal{P}(\Y)}$ that does not satisfy optional-ignorance, there exists distribution $X$, $Y$ such that $I_\F(X \rightarrow Y) < 0$. Choose $Y \sim f^*[\emptyset]$ where $f^*$ is the function that has no corresponding $g \in \F_{\X \rightarrow \mathcal{P}(\Y)}$ that can ignore its inputs. Pick $X$ as the uniform distribution, and note that for all $g \in G$, there exists some measurable subset $X' \subset X$ on which $g$ will produce a distribution unequal to $f^*[\emptyset]$, and therefore having higher cross entropy. The expected cross entropy expressed in $H_{\F_{\X \rightarrow \mathcal{P}(\Y)}}(Y | X)$ is thus higher than in $H_{\F_{\n \rightarrow \mathcal{P}(\Y)}}(Y)$, and $I_\F(X \rightarrow Y) < 0$. Thus, if the function class does not satisfy optional ignorance, then the $\F$-information could be negative.

\paragraph{Independence}Given any discrete $X$, $Y$, $\F_{\n \rightarrow \mathcal{P}(\Y)}$, $\F_{\X \rightarrow \mathcal{P}(\Y)}$ that does not satisfy optional-ignorance, there exists an independent $X$, $Y$ such that $I_\F(X \rightarrow Y) > 0$. Choose $Y$ such that the distribution $P_Y$ can be expressed as $g[x]$ for some $x \in X, g \in \F_{\X \rightarrow \mathcal{P}(\Y)}$, but cannot be expressed by any $f \in \F_{\n \rightarrow \mathcal{P}(\Y)}$. Let $X$ be the distribution with all its mass on $x$; note that the cross entropy of $P_Y$ with $g[x]$ will be zero, and is less than that of the function $f[\emptyset]$ (because $f[\emptyset]$ and $P_Y$ differs on a measurable subset, the cross entropy will be positive). Thus, if the function class does not satisfy optional ignorance, then the $\F$-information does not take value $0$ when the two distributions are independent. %the optional-ignorance is the least we could ask of function classes in order for the $\F$-Information to be both non negative and take the value 0 when two distributions are independent.

\section{Limitations and Future Work}

$\F$-information is empirically useful, has several intuitive theoretical properties, but exhibits certain limitations. For example, Shannon information can be manipulated with certain additive algebra (e.g. $H(X, Y) = H(X) + H(Y \mid X)$), while the same does not hold true for general $\F$-Information.  %cannot because of the potential complexity of $\F$ (e.g. deep neural networks).  
However, this could be possible if we choose $\F$ to be a mathematically simple set, such as the set of polynomial time computable functions. It would be interesting to find special classes of $\F$-Information where additional theoretical development is possible. 

Another interesting direction is better integration of $\F$-Information with machine learning. The production of usable information (representation learning), acquisition of usable information (active learning) and exploitation of usable information (classification and reinforcement learning) could potentially be framed in a similar $\F$-information-theoretic manner. It is interesting to see whether fruitful theories can arise from these analyses. 

% Finally $\F$-information requires the set of functions $\F$ to be easy to optimize. Machine learning research has identified many such functions (e.g. linear functions, convex functions, ReLU neural networks) suitable for a variety of data types (e.g. images, text, audio). Nevertheless, one should be cautious when applying $\F$-information to functions and data types that are not well understood in the machine learning literature. 

\end{document}